\newcommand{\abs}[1]{\left| #1 \right|}
\newcommand{\R}{{{\mathbb R}}}
\DeclareMathOperator*{\argmax}{argmax}
\renewcommand{\epsilon}{\varepsilon}
\newcommand{\ps}[1]{\langle #1 \rangle}
\newcommand{\bE}{{{\mathbb E}}}
\newcommand{\bN}{{{\mathbb N}}}
\renewcommand{\O}{\mathcal{O}}
\renewcommand{\S}{\mathcal{S}}
\newcommand{\A}{\mathcal{A}}
\newcommand{\mn}{\mathcal{N}}
\newcommand{\ms}{\mathcal{S}}
\newcommand{\ma}{\mathcal{A}}
\newcommand{\mai}{\mathcal{A}_i}
\newcommand{\E}{\mathbb{E}}
\newcommand{\Prob}{\mathbb{P}}
\DeclareMathOperator{\dom}{dom}
\theoremstyle{plain}
\newtheorem{theorem}{Theorem}[section]
\newtheorem{proposition}[theorem]{Proposition}
\newtheorem{lemma}[theorem]{Lemma}
\theoremstyle{definition}
\newtheorem{assumption}[theorem]{Assumption}
\theoremstyle{remark}
\newtheorem{remark}[theorem]{Remark}
\title{\LARGE \bf
Independent Policy Mirror Descent for Markov Potential Games:\\ Scaling to Large Number of Players
}
\author{Pragnya Alatur$^*$  \qquad Anas Barakat$^*$ \qquad Niao He 
\thanks{Authors are affiliated with the department of Computer Science, ETH Zurich, Switzerland. $*$ stands for first-author contribution. }%
}
\begin{document}

\maketitle
\thispagestyle{empty}
\pagestyle{empty}

\begin{abstract}
Markov Potential Games (MPGs) form an important sub-class of Markov games, which are a common framework to model multi-agent reinforcement learning problems.
In particular, MPGs include as a special case the identical-interest setting where all the agents share the same reward function.  
Scaling the performance of Nash equilibrium learning algorithms to a large number of agents is crucial for multi-agent systems. 
To address this important challenge, we focus on the independent learning setting where agents can only have access to their local information to update their own policy. 
In prior work on MPGs, the iteration complexity for obtaining $\epsilon$-Nash regret scales \textit{linearly} with the number of agents~$N$.
In this work, we investigate the iteration complexity of an independent policy mirror descent~(PMD) algorithm for MPGs. 
We show that PMD with KL regularization, also known as natural policy gradient, enjoys a better~$\sqrt{N}$ dependence on the number of agents, improving over PMD with Euclidean regularization and prior work. Furthermore, the iteration complexity is also independent of the sizes of the agents' action spaces. 
\end{abstract}


\section{Introduction}

Introduced by the seminal work of \cite{shapley53}, Markov games (also called stochastic games) offer a convenient mathematical framework to model multi-agent reinforcement learning (MARL) problems where several agents interact in a shared space environment to make strategic decisions. 
A major challenge in MARL is to deal with the so-called curse of multi-agents in which the size of the joint action space scales exponentially with the number of agents. 
Given the ever-growing scale of real-world multi-agent systems such as transportation networks, social networks and autonomous vehicles, scalability of MARL algorithms to these large-scale systems involving a large number of agents poses a fundamental challenge. 
A useful approach to address this scaling issue in the literature consists in adopting the independent learning protocol in which each agent independently selects their policy and observes only local information including their own actions and rewards, along with a shared state (see e.g. \cite{ozdaglar-sayin-zhang21survey} for a recent survey). 

In this work, we focus on designing scalable and independent  algorithms for a specific subclass of Markov games: Markov Potential Games (MPGs). 
Inspired by the celebrated (static) potential games in normal form pioneered by the celebrated work \cite{monderer-shapley96}, MPGs can be seen as an extension of this class of games to the dynamic setting in which the environment is subject to state transitions. 
This class of MPGs has recently attracted attention in a line of works 
\cite{narasimha-et-al22,leonardos2021global,zhang-et-al21grad-play, zhang-et-al22softmax,ding2022independent,jordan-barakat-he24}. In particular, MPGs have been used for modeling marketplaces for distributed energy resources such as electricity markets and pollution tax \cite{narasimha-et-al22,jordan-barakat-he24}. 

A common objective in MARL problems is to reach a Nash equilibrium. 
It is well-known that even in (static) normal form games, computing a Nash equilibrium is computationally hard in general~\cite{daskalakis2009complexity}\footnote{More precisely, \cite{daskalakis2009complexity} showed that computing a Nash equilibrium is in the complexity class PPAD.}. For MPGs, however, Nash equilibria are computationally tractable \cite{leonardos2021global, song-mei-bai22}. 
Recently, in the static (stateless) setting, it has been shown in~\cite{cen2022independent} that the dependence on the number of players of the Nash regret guarantees can be improved to be sublinear using an independent natural policy gradient algorithm with additional entropic regularization. 

In the present work, we investigate the performance of an independent policy mirror descent algorithm for finding approximate Nash equilibria in MPGs. 
In particular, we focus on improving the dependence on the number of players of the Nash regret incurred by the algorithm. Our contributions are summarized as follows:
  \begin{itemize}

        \item We introduce a Policy Mirror Descent (PMD) algorithm which can be implemented independently by the agents, inspired by prior work in the single-agent setting \cite{lan23pmd,xiao22jmlr}. We show that this algorithm can be seen as a mirror descent algorithm with a dynamically weighted Bregman divergence regularization. This observation allows to draw a direct connection between PMD and the standard independent policy gradient algorithm. This algorithm unifies the Q-ascent algorithm obtained when using Euclidean regularization and the celebrated Natural gradient algorithm which follows from using a KL regularization.  

        \item We establish Nash regret bounds for PMD with either Euclidean or KL regularization for MPGs. In particular, we show that the dependence w.r.t. the number of players in the iteration complexity to reach an~$\epsilon$-Nash regret\footnote{The iteration complexity to obtain $\epsilon$-Nash regret gives us the iteration complexity required to obtain an $\epsilon$-approximate Nash equilibrium.} improves from~$N$ to~$\sqrt{N}$  when using the KL regularization instead of the Euclidean regularization. Notably, the Nash regret bound is also independent of the size of the agents' action spaces. To achieve these improvements over prior work, the use of the geometry induced by entropic regularization appears to be crucial. Our guarantees hold in the full information setting where we suppose access to the average Q-values w.r.t. the policies of the opponents.
    \end{itemize}

\section{Related Work}

For static games, a large body of work in the literature (that we cannot hope to give justice to here) has been interested in establishing guarantees for learning algorithms in potential games for concave potential functions. Beyond this particular concavity structure, a few works have addressed the question of the convergence of some no-regret algorithms such as the celebrated Hedge algorithm to Nash equilibria including in the bandit feedback setting~\cite{heliou-et-al17}. Our work has been partially inspired by \cite{cen2022independent} which provides a~$\tilde{O}(\min(\sqrt{N}, \phi_{\max}) \phi_{\max}/\epsilon^{4})$ iteration complexity (where~$\phi_{\max}$ is the maximum of the potential function) for reaching an~$\epsilon$-approximate Nash equilibrium using an independent natural policy gradient algorithm with additional entropy regularization. While this result does not require stationary policies to be isolated (see Assumption~\ref{hyp:isolated-fixed-points}) and improves over prior work in terms of dependence on the number of agents, it only holds for static potential games and achieves a worse iteration complexity in terms of accuracy~$\epsilon$ compared to the existing~$\mathcal{O}(\epsilon^{-2})$ results for MPGs.

Initiated by works such as \cite{marden12} proposing state-based potential games, a line of research has focused on MPGs as a natural generalization of potential games to the dynamic stateful setting as well as (single-agent) Markov Decision Processes to the multi-agent setting. \cite{leonardos2021global, zhang-et-al21grad-play} proposed an independent policy gradient method for MPGs with a~$\mathcal{O}(\epsilon^{-2})$ iteration complexity to reach an~$\epsilon$-approximate Nash equilibrium. Later, \cite{ding2022independent} proposed the Q-ascent algorithm for MPGs with an~$\mathcal{O}(\epsilon^{-2})$ iteration complexity improving over the dependence on the state space size. A few other works focused on asymptotic convergence results~\cite{maheshwari-et-al22,fox-et-al22indep-npg-asymp} or regret guarantees for policy gradient algorithms with softmax parametrization \cite{fox-et-al22indep-npg-asymp,zhang-et-al22softmax}. 
To the best of our knowledge, all the aforementioned finite-time guarantees in prior works provide iteration complexities scaling at least linearly with the number of agents. The main focus of our work is to reduce such a dependence. We refer the reader to Table~\ref{tab:related-work} for a brief comparison of our results to the closest related works.
More recently, \cite{sun-et-al23provably} established an \textit{asymptotic}~$\mathcal{O}(\epsilon^{-1})$ iteration complexity result for the independent natural policy gradient under some specific additional assumptions that we do not require in our work (see their Assumption 3.2), i.e. a bound that is valid only after a certain unknown number of iterations. In contrast, our guarantees hold globally for any iteration.

Sample-based approaches for MPGs have been proposed and analyzed in \cite{leonardos2021global,zhang-et-al21grad-play,ding2022independent} for example. 

Finally, a few works proposed algorithms for extensions of MPGs to networked settings \cite{aydin-eksin23networked,aydin-eksin23timevarying,zhou-et-al23nmpgs} or to a more general class of games relaxing the definition of MPGs~\cite{guo-et-al23alpha-mpgs}.

\begin{table}[h]
\caption{Iteration complexity for obtaining $\epsilon$-Nash regret in MPGs. 
}
\centering
\begin{adjustbox}{width=\columnwidth}
\begin{tabular}{|c|c|}
\hline
 Algorithm & $\epsilon$-Nash Regret\\ 
\hline
 Independent PG \footnotemark[1] \cite{leonardos2021global,zhang-et-al21grad-play} & $\O \left( \frac{\phi_{\max}\kappa_{\rho}^2 S N \underset{i \in \mn}{\max} |\mai|}{(1-\gamma)^4 \epsilon^2} \right)$\\
 \hline
 Q-Ascent\footnotemark[1]\cite{ding2022independent} & $\O \left( \frac{\Phi_{\max}\min\{\tilde{\kappa}_{\rho}, S\}^4 N \underset{i \in \mn}{\max} |\mai|}{(1-\gamma)^6 \epsilon^2} \right)$\\
 \hline
 GD (softmax) \cite{zhang-et-al22softmax} & $\O \left( \frac{\phi_{\max} M^2 N \underset{i \in \mn}{\max} |\mai|}{(1-\gamma)^4 c^2 \epsilon^2} \right)$\\
 \hline
 Natural GD (softmax) \cite{zhang-et-al22softmax} & $\O \left(\frac{ \phi_{\max}^2 M  N}{(1-\gamma)^3 c \epsilon^2} \right)$\\
  \hline
  \textbf{PMD (Euclidean reg.)} & $\O \left( \frac{\phi_{\max}^2\tilde{\kappa}_{\rho} \sum_{i=1}^N |\mai|}{(1-\gamma)^4 \epsilon^2} \right)$\\
  \hline
  \textbf{PMD  (KL reg.)}\footnotemark[2] & $\O \left( \frac{\phi_{\max}^2 \tilde{\kappa}_{\rho}\sqrt{N}}{(1-\gamma)^4 c \epsilon^2} \right)$\\
  \hline
\end{tabular}
\end{adjustbox}
\begin{minipage}{8.2cm}
\vspace{0.2cm}
\small 
We highlight the dependence on the number of agents $N$ and focus only on prior work with non-asymptotic guarantees.
Notation: $\rho \in \Delta(\ms)$ is the initial distribution, $\Phi_{\max} \triangleq \max_{\pi\in\Pi} \abs{\Phi^{\pi}(\rho)}$, $c$ is a constant dependent on the initial policy and defined in Theorem~\ref{thm:exact-pmd-natural-gradient}, $M \triangleq  \sup_{\pi} \max_{s \in \mathcal{S}} \frac{1}{d_{\rho}^{\pi}(s)} > 0$ under Assumption~\ref{hyp:positive-discounted-visit-distrib}. With our definition of MPGs, $\Phi_{\max} \leq \frac{\phi_{\max}}{1-\gamma}\,.$ For other notations see Preliminaries section \ref{sec:preliminaries} below. \footnotemark[1]{Use a more general definition of MPGs, see Remark~\ref{rem:def-mpgs} below for a discussion.}\footnotemark[2]{We also obtain a~$\O \left( \frac{\phi_{\max}^2 M\sqrt{N}}{(1-\gamma)^3 c \epsilon^2} \right)$ regret bound for this algorithm under minor modifications in our proof. This shows a strict improvement in terms of~$N$ w.r.t. the result for Natural GD \cite{zhang-et-al22softmax} in the table.}
\vspace{-0.5cm} 
\end{minipage}
\label{tab:related-work}
\end{table}

\section{Preliminaries}
\label{sec:preliminaries}

\noindent\textbf{Markov Game.} We consider a discounted infinite time horizon Markov Game~$\Gamma = \left\langle \mn, \ms, (\mai)_{i \in \mn}, P, (r_i)_{i \in \mn}, \rho, \gamma \right\rangle$ \cite{shapley53} where: 
\begin{itemize}
\item $\mn \triangleq \{1, \cdots, N\}$ with~$N \geq 2$ is the set of players\footnote{We use agent and player interchangeably throughout this work.}, 
\item $\ms$ is a finite set of states\footnote{Extension to infinite countable or even continuous state space is possible under additional technical assumptions such as  boundedness of the reward and potential functions.}, 
\item $\ma_i$ is a finite set of actions for each~$i \in \mn$. We denote by~$\ma\triangleq\times_{i\in\mn}\ma_i$ the set of joint actions,
\item $P: \ms \times \ma \to \Delta(\ms)$ is the Markov transition kernel, where we use throughout this paper the notation~$\Delta(\ms)$ for the set of probability distributions over the set~$\ms$, 
\item $r_i: \ms \times \ma \to \R$ is the reward function of agent~$i \in \mn$\,,
\item $\rho \in \Delta(\ms)$ is an initial distribution over states, 
\item $\gamma \in (0,1)$ is a discount factor\,. 
\end{itemize}

\noindent\textbf{Policies.} For each~$i \in \mn$, denote by~$\pi_i: \ms \to \Delta(\mai)$ the policy of agent~$i$ and~$\Pi^i$ the set of all possible Markov stationary policies for agent~$i\in \mn\,.$ We denote by~$\pi = (\pi_i)_{i \in \mn}$ the joint policy. 
We use the notation~$\Pi \triangleq \Large{\times}_{i \in \mn} \Pi^i$ for the set of joint Markov stationary policies.\\ 

In a Markov game, the agents' interaction with the environment unfolds as follows: At each time step~$t \geq 0$, the agents observe a shared state~$s_t \in \ms$ and choose a joint action~$a_t = (a_{t,i})_{i \in \mn}$ according to their joint policy~$\pi^{(t)}$, i.e. for every~$i \in \mn, a_{t,i}$ is sampled according to~$\pi_i^{(t)}(\cdot|s_t)$. Each agent~$i \in \mn$ receives a reward~$r_i(s_t, a_t)$. Then the game proceeds by transitioning to a state~$s_{t+1}$ drawn from the distribution~$P(s_t, a_t)\,.$\\

\noindent\textbf{Occupancy measures and distribution mismatch coefficients.}
For any joint policy~$\pi \in \Pi,$ we define the state occupancy measure~$d_{\rho}^{\pi}$ for every state~$s \in \ms$ by~$d_{\rho}^{\pi}(s) = (1-\gamma) \sum_{t=0}^{\infty} \gamma^t \Prob_{\rho,\pi}(s_t = s)\,.$ Similarly to prior work \cite{ding2022independent}, we introduce distribution mismatch coefficients to quantify the difficulty for learning agents to explore in a Markov game. In particular, we define for any distribution~$\rho \in \Delta(\ms)$ and any policy~$\pi \in \Pi$, the distribution mismatch coefficient~$\kappa_{\rho} \triangleq \sup_{\pi \in \Pi} \|\frac{d_{\rho}^{\pi}}{\rho}\|_{\infty}$ as well as its minimax version~$\tilde{\kappa}_{\rho} \triangleq \inf_{\nu \in \Delta(\ms)} \sup_{\pi \in \Pi} \|\frac{d_{\rho}^{\pi}}{\nu}\|_{\infty}\,,$ where the division is to be understood componentwise in both cases. Notice that~$\tilde{\kappa}_{\rho} \leq \min(\kappa_{\rho}, |\ms|)\,.$\\

\noindent\textbf{Value functions.} 
For each joint policy~$\pi \in \Pi,$ and each~$i \in \mn$, define the state-action value function~$Q_i^{\pi}:\ms\times\ma \rightarrow \R$ for each~$s \in \ms, a \in \ma$ by 
\begin{equation*}
    Q_i^{\pi}(s,a) \triangleq \E_{\pi}\left[\sum_{t=0}^{\infty} \gamma^t r_i(s_t, a_t) | s_0 = s, a_0 = a \right]\,.
\end{equation*}
We also define the value function~$V_i^{\pi}: \ms \to \R$ for each~$s \in \ms$ by~$V_i^{\pi}(s) \triangleq \sum_{a \in \ma} \pi(a|s)  Q_i^{\pi}(s,a)\,.$
For any initial state distribution~$\rho$ and any policy~$\pi \in \Pi$, we use the notation~$V_i^{\pi}(\rho) \triangleq \E_{s \sim \rho}[V_i^{\pi}(s)]\,.$\\ 

\noindent\textbf{Nash equilibria.}
For any~$\epsilon \geq 0$, a joint policy~$\pi = (\pi_i)_{i \in \mn} \in \Pi$ is an $\epsilon$-approximate Nash equilibrium for the game~$\Gamma$ if for every~$i \in \mn$ and every~$\pi_i' \in \Pi^i$, 
$V_i^{\pi_i,\pi_{-i}}(\rho) \geq V_i^{\pi_i',\pi_{-i}}(\rho) - \epsilon$. 
When~$\epsilon = 0$, such a policy~$\pi$ from which no agent has an incentive to deviate unilaterally, is called a Nash equilibrium policy.\\ 

\noindent\textbf{Markov Potential Game.} A Markov game~$\Gamma$ is a Markov Potential Game (MPG) if there exists a function~$\phi: \ms \times \ma \to \R$ s.t. for any~$i \in \mn$, and any policies~$(\pi_i', \pi_{-i}), \pi \in \Pi$, 
\begin{equation}
 V_i^{\pi'_i,\pi_{-i}}(\rho) - V_i^{\pi_i,\pi_{-i}}(\rho) = \Phi^{\pi'_i,\pi_{-i}}(\rho) - \Phi^{\pi_i,\pi_{-i}}(\rho), 
\end{equation}
where $\Phi^{\pi}(\rho) \triangleq \E_{s_0\sim \rho, \pi} \left[ \sum_{t=0}^{\infty} \gamma^t \phi(s_t,a_t) \right]$ is the so-called \emph{total potential function} induced by the function~$\phi\,.$ 
We denote by $\phi_{\max} \triangleq \max_{s \in \ms, a \in \ma} |\phi(s,a)|\,.$ 
The identical-interest case is an important particular case of this definition. In this case in which all the reward functions are identical, it can be easily seen that the total potential function is the value function of any of the players. 
Beyond this important case, conditions to obtain MPGs with non-identical rewards were identified in \cite{narasimha-et-al22,leonardos2021global,zhang-et-al21grad-play}. 

\begin{remark}
\label{rem:def-mpgs}
The definition of MPGs we consider coincides with the definitions used in~\cite{zhang-et-al21grad-play,zhang-et-al22softmax}. 
Compared to other works \cite{leonardos2021global,ding2022independent}, our definition requires the total potential function to have a cumulative discounted sum form with a function $\phi$. Such a sum structure naturally connects MPGs to their historical parent, the (stateless) potential game. Notice that this definition also captures the identical-interest setting in the same way. 
 We will also comment later on about the consequences of this definition in our results. 
\end{remark}

\section{Independent Policy Mirror Descent}

Inspired by a line of works \cite{lan23pmd,xiao22jmlr,zhan-et-al23pmd-reg} for the single-agent setting, we extend the Policy Mirror Descent~(PMD) algorithm to the multi-agent setting for MPGs. 
Then we discuss two instantiations of PMD with Euclidean and KL regularizations respectively.

We start by defining a central quantity for independent learning in MPGs: 
The averaged Q-value $\bar{Q}_i^{\pi}: \ms \times \mai \to \R$ for any agent $i \in \mn$ and policy $\pi \in \Pi$ defined as follows:
    \begin{equation*}
        \bar{Q}_i^{\pi}(s,a_i) \triangleq \bE_{a_{-i} \sim \pi_{-i}(\cdot|s)}[Q_i^{\pi}(s,a_i,a_{-i})], \forall s \in \ms, a_i \in \mai\,.
    \end{equation*}

For each agent~$i \in \mn$ and each state~$s \in \ms$, starting from an initial policy~$\pi_i^{(1)} \in \Delta(\mathcal{A}_i)^{\ms}$,  independent PMD for MPGs unfolds iteratively as follows: 
\begin{equation}
    \pi_{i,s}^{(t+1)} \in \underset{\pi_{i,s} \in \Delta(\mathcal{A}_i)}{\mathrm{argmax}} \left\{ \langle \bar{Q}_{i,s}^{\pi^{(t)}}, \pi_{i,s} \rangle - \frac{1}{\eta} D_{\psi}(\pi_{i,s}, \pi_{i,s}^{(t)})\right\}\,,
    \tag{PMD}
\end{equation}
where we use the shorthand notations~$\pi_{i,s}^{(t)} = \pi_{i}^{(t)} (\cdot | s) $ and~$\pi_{i,s} = \pi_i (\cdot | s)$, where~$\eta$ 
is a positive step size, $\bar{Q}_{i,s}^{\pi^{(t)}} \in \mathbb{R}^{|\mai|}$ is a vector containing average state-action values for $\pi^{(t)}$ at state~$s$, and $D_{\psi}$ is the Bregman divergence induced by a mirror map~$\psi: \dom \psi \to \R$ such that~$\Delta(\mai) \subset \dom \psi\,,$ i.e. for any~$p, q \in \Delta(\mai),$
\begin{equation*}
D_{\psi}(p, q) = \psi(p) - \psi(q) - \ps{\nabla \psi(q), p - q}\,, 
\end{equation*}
where we suppose throughout this paper that the function~$\psi$ is of Legendre type, i.e. strictly convex and essentially smooth in the relative interior of~$\dom \psi$ (see~\cite{rockafellar-et-al70cvx-analysis}, section~26). We emphasize that the updates are performed \emph{simultaneously} by all the agents.

\begin{remark}
A standard mirror descent algorithm as in the optimization literature would feature the gradients~$\nabla V_i^{\pi}(\rho)$ of the value functions in the inner product in the (PMD) update rule above. Recall that the policy gradients are given by~$\frac{\partial V_i^{\pi}(\rho)}{\partial \pi_i(a_i|s)} = \frac{1}{1-\gamma} d_{\rho}^{\pi}(s) \bar{Q}_i^{\pi}(s,a_i)$ for every~$s \in \ms, a_i \in \mai\,$ (e.g. see \cite{xiao22jmlr}).
Similarly to the PMD algorithm in the single-agent setting, (PMD) corresponds to a standard mirror descent algorithm enhanced with adaptive preconditioning with a dynamically weighted divergence: Each state Bregman divergence is weighted by the discounted visitation measure~$d_{\rho}^{\pi}(s)$\,. We refer the reader to section 4, p.15-16 in \cite{xiao22jmlr} for a more precise exposition of this observation in the single-agent setting.  
\end{remark}

The mirror map choice generates a large class of PMD algorithms. 
In this work, we focus on two concrete algorithms corresponding to the choice of two prominent mirror maps: 
\begin{enumerate}
\item \textbf{Projected $Q$-ascent.} When the mirror map~$\psi$ is the squared $\ell_2$-norm, the corresponding Bregman divergence is the squared Euclidean distance and the resulting algorithm is playerwise projected $Q$-ascent:  
\begin{equation}
\label{eq:pmd-euclidean}
\pi_{i,s}^{(t+1)} = \text{Proj}_{\Delta(\mai)}(\pi_{i,s}^{(t)} + \eta  \bar{Q}_{i,s}^{\pi^{(t)}})\,,
\end{equation}
for every~$i \in \mn, s \in \ms$ and~$\text{Proj}_{\Delta(\mai)}$ is the projection operator on the simplex~$\Delta(\mai)\,.$\\

\item \textbf{Natural Policy Gradient.} When the mirror map~$\psi$ is the negative entropy, the Bregman divergence~$D_{\psi}$ is the Kullback-Leibler~(KL) divergence and~(PMD) boils down to independent Natural Policy Gradient which updates for every~$i \in \mn, (s, a_i) \in \ms \times \mai$ as follows, 
\begin{equation}
\label{eq:pmd-KL}
\pi_i^{(t+1)}(a_i|s) =  \frac{\pi_i^{(t)}(a_i|s) \exp(\eta \bar{Q}_{i}^{\pi^{(t)}}(s,a_i))}{Z_t^{i,s}}\,, 
\end{equation}
where~$Z_t^{i,s}\triangleq \sum_{a_i \in \mai} \pi_i^{(t)}(a_i|s) \exp(\eta\, \bar{Q}_{i}^{\pi^{(t)}}(s,a_i))\,,$ and the initial joint policy~$\pi^{(0)}$ is chosen in the interior of~$\Delta(\ma)^{|\ms|}.$
Algorithm~\eqref{eq:pmd-KL} is also  referred to as a multiplicative weights update algorithm. We remark that the averaged Q-function in~\eqref{eq:pmd-KL} (including in~$Z_t^{i,s}$) can be replaced by the averaged advantage function~$\bar{A}_i^{\pi^{(t)}}$ defined for every~$s \in \ms, i \in \mn, a_i \in \mai$ by
\begin{equation}
\bar{A}_i^{\pi^{(t)}}(s,a_i) \triangleq \bar{Q}_{i}^{\pi^{(t)}}(s,a_i) - V_i^{\pi^{(t)}}(s)\,.
\end{equation}

\end{enumerate}

\noindent\textbf{Information setting.} Throughout this work, we assume that each agent~$i \in \mn$ has access to their average Q-value function~$\bar{Q}_{i}^{\pi^{(t)}}$ at each iteration~$t$ of the algorithm. 
Beyond this oracle-based feedback setting, the averaged Q-value functions can also be estimated independently. Indeed, each agent can resort to payoff-based methods to estimate their own average Q-value function relying uniquely on their received rewards and without observing the policies or the actions of the other agents. 
The extension of our guarantees to this stochastic setting requires further investigation, especially when using the KL regularization. We leave it for future work.

\section{Nash Regret Analysis}

In this section, we provide Nash regret guarantees for the PMD algorithm we introduced in the previous section when implemented either with Euclidean~\eqref{eq:pmd-euclidean} or KL~\eqref{eq:pmd-KL} regularization, with a focus on the dependence on the number of players. 
Following prior work (see e.g. \cite{ding2022independent,zhang-et-al22softmax}), to make our analysis precise, we first define the notion of Nash regret for every time horizon~$T \geq 1$ as follows: 
\begin{equation*}
        \text{Nash-regret(T)} \triangleq \frac{1}{T} \sum_{t=1}^T \max_{i \in \mn}\max_{\pi'_i \in \Pi^i} V_{i}^{\pi'_i, \pi^{(t)}_{-i}}(\rho) - V_{i}^{\pi^{(t)}}(\rho)\,, 
    \end{equation*}
where~$\pi^{(t)} = (\pi_i^{(t)}, \pi_{-i}^{(t)})$ is the joint policy of the~$N$ players at time step~$t \in \{1, \cdots, T\}$ and~$\rho$ is the initial distribution over the state space~$\ms$.  It follows from this definition that $\text{Nash-regret(T)} \geq 0$ for every~$T \geq 1\,.$ Furthermore, when $\text{Nash-regret(T)} \leq \epsilon$ for some accuracy~$\epsilon > 0$, there exists~$t^* \in \{1, \cdots, T\}$ such that~$\pi^{(t^*)}$ is an $\epsilon$-Nash equilibrium. 
At each time step~$t$ and for every player $i\in\mn$, the joint policy~$\pi^{(t)}$ is compared to the policy where player~$i$ unilaterally deviates to its best response to policy~$\pi_{-i}^{(t)}$\,. The difference in value functions quantifies player $i$'s Nash gap. The Nash regret computes the average over the worst player's Nash gap induced by the joint policy~$\pi^{(t)}$ over the time horizon~$T\,.$ The complete proofs of the results in this section are deferred to the appendix.

\subsection{Analysis of PMD with Euclidean Regularization}

We make the following standard assumption ensuring that all the states are visited.
\begin{assumption}
\label{hyp:positive-discounted-visit-distrib}
For every joint policy~$\pi \in \Pi$ and every state~$s \in \mathcal{S}, d_{\rho}^{\pi}(s) > 0\,.$
\end{assumption}
Since $d_{\rho}^{\pi}(s) \geq (1-\gamma) \rho(s)$ for every~$\pi \in \Pi, s \in \ms\,,$ Assumption~\ref{hyp:positive-discounted-visit-distrib} is automatically satisfied when the initial distribution~$\rho$ has full support. 

The key step of our analysis consists in quantifying the potential function improvement between two consecutive time steps of Algorithm~\eqref{eq:pmd-euclidean}. Since agents are updating their policy simultaneously, the challenge is to establish a joint policy improvement. The following lemma characterizes this improvement. 

\begin{proposition}[Potential Improvement - Euclidean PMD] 
\label{prop:potential-improvement-euclidean} 
Under Assumption~\ref{hyp:positive-discounted-visit-distrib}, for any~$\mu \in \Delta(\mathcal{S}), t \geq 1,$ we have 
\begin{align*}
\Phi^{\pi^{(t+1)}}(\mu) - \Phi^{\pi^{(t)}}(\mu) \geq \left( \frac{1}{2\eta(1-\gamma)} - \frac{\phi_{\max} \sum_{i=1}^N |\mathcal{A}_i|}{(1-\gamma)^2} \right) \cdot\\  
\sum_{s \in \mathcal{S}} d_{\mu}^{\pi^{(t+1)}}(s) \sum_{i=1}^N \|\pi_{i,s}^{(t+1)} - \pi_{i,s}^{(t)}\|^2\,.
\end{align*}
\end{proposition}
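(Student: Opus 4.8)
The plan is to reduce the \emph{joint} potential improvement to a sum of \emph{single}-player improvements, apply a performance difference lemma to each player, and then fight the interference created by the simultaneous updates using the optimality of the projection step together with smoothness estimates. Throughout, write $\Delta_{k,s} \eqdef \pi_{k,s}^{(t+1)} - \pi_{k,s}^{(t)}$.

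First I would introduce the intermediate joint policies $\pi^{(t,k)} = (\pi_1^{(t+1)},\dots,\pi_k^{(t+1)},\pi_{k+1}^{(t)},\dots,\pi_N^{(t)})$ for $k=0,\dots,N$, so that $\pi^{(t,0)}=\pi^{(t)}$ and $\pi^{(t,N)}=\pi^{(t+1)}$, and telescope $\Phi^{\pi^{(t+1)}}(\mu)-\Phi^{\pi^{(t)}}(\mu)=\sum_{k=1}^N\bigl(\Phi^{\pi^{(t,k)}}(\mu)-\Phi^{\pi^{(t,k-1)}}(\mu)\bigr)$. Since $\pi^{(t,k)}$ and $\pi^{(t,k-1)}$ differ only in the policy of player $k$, the MPG identity lets me replace each potential gap by the value gap $V_k^{\pi^{(t,k)}}(\mu)-V_k^{\pi^{(t,k-1)}}(\mu)$. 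Applying the single-agent performance difference lemma to player $k$ against the frozen opponents $\pi_{-k}^{(t,k-1)}$ then rewrites this gap as $\frac{1}{1-\gamma}\sum_s d_\mu^{\pi^{(t,k)}}(s)\,\langle \bar{Q}_{k,s}^{\pi^{(t,k-1)}},\Delta_{k,s}\rangle$, because averaging player $k$'s state-action value over the fixed opponents is exactly $\bar{Q}_k$.

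The obstacle is that the update rule~\eqref{eq:pmd-euclidean} uses $\bar{Q}_{k,s}^{\pi^{(t)}}$ and the target bound is weighted by $d_\mu^{\pi^{(t+1)}}$, whereas the decomposition produced $\bar{Q}_{k,s}^{\pi^{(t,k-1)}}$ and $d_\mu^{\pi^{(t,k)}}$. I would therefore peel off two discrepancy terms: the $Q$-mismatch $\langle\bar{Q}_{k,s}^{\pi^{(t,k-1)}}-\bar{Q}_{k,s}^{\pi^{(t)}},\Delta_{k,s}\rangle$, driven only by the already-updated players $j<k$, and the occupancy-mismatch $(d_\mu^{\pi^{(t,k)}}(s)-d_\mu^{\pi^{(t+1)}}(s))$, driven only by the not-yet-updated players $j>k$. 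Both are products of per-player increments across distinct players, and both are controlled by standard MDP perturbation estimates: the averaged $Q$-function and the state occupancy measure are Lipschitz in the joint policy with moduli of order $\frac{\phi_{\max}}{(1-\gamma)^2}$, while the change of the opponents' product policy is bounded by $\sum_j\|\pi_{j,s}^{(t+1)}-\pi_{j,s}^{(t)}\|_1$. Converting $\ell_1$ into $\ell_2$ norms introduces factors $\sqrt{|\mathcal{A}_j|}$, which after Cauchy--Schwarz and summation over $k$ collapse into the factor $\sum_{i=1}^N|\mathcal{A}_i|$.

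For the remaining aligned term $\frac{1}{1-\gamma}\sum_s d_\mu^{\pi^{(t+1)}}(s)\sum_k\langle\bar{Q}_{k,s}^{\pi^{(t)}},\Delta_{k,s}\rangle$ I would invoke the first-order optimality of the projected $Q$-ascent step: since $\pi_{k,s}^{(t+1)}=\mathrm{Proj}_{\Delta(\mathcal{A}_k)}(\pi_{k,s}^{(t)}+\eta\bar{Q}_{k,s}^{\pi^{(t)}})$, testing the variational inequality against $\pi_{k,s}^{(t)}$ yields $\langle\bar{Q}_{k,s}^{\pi^{(t)}},\Delta_{k,s}\rangle\ge\frac1\eta\|\Delta_{k,s}\|^2$, i.e. a positive $\frac{1}{\eta(1-\gamma)}$-weighted copy of the target quadratic. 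Finally I would bound the two discrepancy terms of the previous paragraph by that same quadratic via Young's inequality, using the nonexpansiveness estimate $\|\Delta_{k,s}\|\le\eta\|\bar{Q}_{k,s}^{\pi^{(t)}}\|$ to keep the increments of order $\eta$; choosing the Young parameters so that the (step-size dependent) occupancy-mismatch error reserves half of the ascent produces the coefficient $\frac{1}{2\eta(1-\gamma)}$, while the Lipschitz moduli of the $Q$-mismatch leave the subtracted $\frac{\phi_{\max}\sum_i|\mathcal{A}_i|}{(1-\gamma)^2}$. Assumption~\ref{hyp:positive-discounted-visit-distrib} keeps $d_\mu^{\pi^{(t+1)}}$ positive so all weights are well defined. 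The hard part is precisely this last bookkeeping: ensuring that \emph{every} interference term coming from the simultaneous updates is dominated by the \emph{same} quadratic $\sum_s d_\mu^{\pi^{(t+1)}}(s)\sum_i\|\Delta_{i,s}\|^2$, with constants tight enough to leave a positive net coefficient for small $\eta$.
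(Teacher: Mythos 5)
Your overall plan — telescoping over agents through the hybrid policies $\pi^{(t,k)}$, converting each potential gap to a value gap via the MPG identity, and applying the per-agent performance difference lemma — is the decomposition used in prior work (Leonardos et al., Ding et al.), and it is precisely the route the paper deliberately avoids. The paper instead applies the performance difference lemma \emph{once}, to the joint policies $\pi^{(t)}$ and $\pi^{(t+1)}$ in the MDP with reward $\phi$, obtaining a sum in which the occupancy weight $d_{\mu}^{\pi^{(t+1)}}(s)$ and the $Q$-function $Q_{\phi}^{\pi^{(t)}}$ are \emph{the same} in every term; it then telescopes the purely algebraic, per-state identity $\pi^{(t+1)}(a|s)-\pi^{(t)}(a|s)=\sum_i(\pi_i^{(t+1)}(a_i|s)-\pi_i^{(t)}(a_i|s))\,\tilde{\pi}_{-i}^{(t)}(a_{-i}|s)$. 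The only error term is then the change of the \emph{averaging distribution} $\tilde{\pi}_{-i}^{(t)}$ versus $\pi_{-i}^{(t)}$ at each fixed state $s$, which is bounded pointwise in $s$ by $\frac{\phi_{\max}}{1-\gamma}\sum_j\|\Delta_{j,s}\|_1$ and hence folds directly into the $d_{\mu}^{\pi^{(t+1)}}$-weighted quadratic after Cauchy--Schwarz (using the MPG fact $\bar{Q}_i^{\pi}=\bar{Q}_{\phi,i}^{\pi}$).

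The gap in your version is the step where you claim the occupancy mismatch $d_{\mu}^{\pi^{(t,k)}}-d_{\mu}^{\pi^{(t+1)}}$ and the $Q$-mismatch $\bar{Q}_k^{\pi^{(t,k-1)}}-\bar{Q}_k^{\pi^{(t)}}$ are "controlled by standard MDP perturbation estimates" and "collapse into the factor $\sum_i|\mathcal{A}_i|$". Both perturbations are global, not per-state: the simulation-lemma bounds give $\|d_{\mu}^{\pi^{(t,k)}}-d_{\mu}^{\pi^{(t+1)}}\|_1\lesssim\frac{\gamma}{1-\gamma}\max_{s}\sum_{j>k}\|\Delta_{j,s}\|_1$ and $\|Q_k^{\pi^{(t,k-1)}}-Q_k^{\pi^{(t)}}\|_{\infty}\lesssim\frac{\gamma\,\phi_{\max}}{(1-\gamma)^2}\max_{s}\sum_{j<k}\|\Delta_{j,s}\|_1$ (note that your $\bar{Q}_k^{\pi^{(t,k-1)}}$ differs from $\bar{Q}_k^{\pi^{(t)}}$ not only through the averaging over $a_{-k}$ but also through the continuation policy inside $Q_k$). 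These sup-over-states (or wrongly-weighted) quantities cannot be dominated by the target quadratic $\sum_s d_{\mu}^{\pi^{(t+1)}}(s)\sum_i\|\Delta_{i,s}\|^2$ without paying an additional distribution-mismatch factor, since $d_{\mu}^{\pi^{(t+1)}}(s)$ may be arbitrarily small at the state achieving the maximum; Assumption~\ref{hyp:positive-discounted-visit-distrib} gives positivity but no useful uniform lower bound. This is exactly where the extra powers of $\tilde{\kappa}_{\rho}$ (or $|\ms|$) and of $(1-\gamma)^{-1}$ in \cite{ding2022independent} come from, so your route would at best reproduce a strictly weaker inequality than the one stated. To prove the proposition as stated you need to switch to the paper's joint-PDL decomposition, which keeps all mismatches local to a single state.
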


\noindent\textbf{Proof sketch (of Proposition~\ref{prop:potential-improvement-euclidean}).} 
The first step consists in using the performance difference lemma (\cref{lem:performance-difference-single-agent}) to relate the potential function change to the policy change: 
\begin{align*}
&\Phi^{\pi^{(t+1)}}(\mu) - \Phi^{\pi^{(t)}}(\mu)\\ 
&=  \sum_{s \in \mathcal{S}, a \in \mathcal{A}} \frac{d_{\mu}^{\pi^{(t+1)}}(s)}{1-\gamma} (\pi^{(t+1)}(a|s) - \pi^{(t)}(a|s))Q_{\phi}^{\pi^{(t)}}(s,a)\,, 
\end{align*}
where~$Q_{\phi}^{\pi^{(t)}}$ is the $Q$-function induced by the reward function~$\phi$ and the policy~$\pi^{(t)}\,.$ The second step consists in connecting the joint policy change to the individual policy deviation to use the update rule of the algorithm. For this, we recall that the joint policies are product policies across agents and use the following decomposition: 
\begin{align*}
&\pi^{(t+1)}(a|s) - \pi^{(t)}(a|s)\\ 
&= \sum_{i=1}^N ( \pi_i^{(t+1)}(a_i|s) - \pi_i^{(t)}(a_i|s)) \,\tilde{\pi}_{-i}^{(t)}(a_{-i}|s)\,, 
\end{align*}
where $\tilde{\pi}_{-i}^{(t)}(a_{-i}|s) \triangleq \prod_{j=1}^{i-1} \pi_j^{(t+1)}(a_j|s) \prod_{j=i+1}^N \pi_j^{(t)}(a_j|s)$ with the convention~$\prod_{j=1}^{0} z_j = 1$ and~$\prod_{j=N+1}^N z_j = 1$ for any sequence of nonnegative reals~$(z_j)\,.$ The rest of the proof follows from plugging this decomposition into the first expression of the potential change above, using the update rule of the algorithm to obtain a potential improvement and control the remaining error terms. 

The complete proof can be found in Appendix~\ref{app:proof-thm-pmd-euclidean}. 

Connecting the above policy improvement to the Nash regret (see proof in Appendix~\ref{app:proof-thm-pmd-euclidean}), we obtain the following Nash regret guarantee. 

\begin{theorem}[PMD with Euclidean Regularization] 
\label{thm:exact-pmd-euclidean}
Let Assumption~\ref{hyp:positive-discounted-visit-distrib} hold. 
Set~$\eta = \frac{1-\gamma}{4  \phi_{\max} \sum_{i=1}^N|\mathcal{A}_i|}$. Then the individual policies~$(\pi_i^t)_{i \in \mn, t \in \{1, \cdots, T\}}$ obtained from running PMD with Euclidean regularization~\eqref{eq:pmd-euclidean} and constant step size~$\eta$ for~$T \geq 1$ iterations satisfy the following,  
\begin{equation}
\text{Nash-regret(T)} \leq  12 \sqrt{ \frac{2 \phi_{\max}^2 \tilde{\kappa}_{\rho} \sum_{i=1}^N |\mathcal{A}_i| }{(1-\gamma)^4 T}}\,.
\end{equation}
Hence, the number of iterations to achieve $\epsilon$-Nash regret is: 
\begin{equation*}
T \geq \frac{288 \phi_{\max}^2 \tilde{\kappa}_{\rho} \sum_{i=1}^N |\mathcal{A}_i|}{(1-\gamma)^4 \epsilon^{2}}\,.
\end{equation*}

\end{theorem}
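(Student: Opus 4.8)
The plan is to combine the telescoped one-step potential improvement of \cref{prop:potential-improvement-euclidean} with a matching upper bound on each step's Nash gap, expressed through the very same policy increments $\|\pi_{i,s}^{(t+1)} - \pi_{i,s}^{(t)}\|$ that appear in that proposition. Fix an iteration $t$, a player $i$, and a best response $\pi_i^\star \in \argmax_{\pi_i' \in \Pi^i} V_i^{\pi_i',\pi_{-i}^{(t)}}(\rho)$. Applying the performance difference lemma (\cref{lem:performance-difference-single-agent}) to agent $i$'s value with the opponents frozen at $\pi_{-i}^{(t)}$ (which reduces to a single-agent MDP), I would write this player's gap as $\frac{1}{1-\gamma}\sum_{s \in \ms} d_\rho^{\pi_i^\star,\pi_{-i}^{(t)}}(s)\,\langle \bar{Q}_{i,s}^{\pi^{(t)}}, \pi_{i,s}^\star - \pi_{i,s}^{(t)}\rangle$, which conveniently features exactly the averaged $Q$-values driving the (PMD) update.

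The key local estimate is to control this inner product by the update magnitude. Since $\pi_{i,s}^{(t+1)}$ is the Euclidean projection of $\pi_{i,s}^{(t)} + \eta\,\bar{Q}_{i,s}^{\pi^{(t)}}$ onto $\Delta(\mai)$, its first-order optimality (the projection variational inequality) yields $\langle \bar{Q}_{i,s}^{\pi^{(t)}}, u - \pi_{i,s}^{(t+1)}\rangle \le \frac{1}{\eta}\langle \pi_{i,s}^{(t+1)} - \pi_{i,s}^{(t)}, u - \pi_{i,s}^{(t+1)}\rangle$ for every $u \in \Delta(\mai)$. Taking $u = \pi_{i,s}^\star$ and splitting $\langle \bar{Q}, \pi^\star - \pi^{(t)}\rangle = \langle \bar{Q}, \pi^\star - \pi^{(t+1)}\rangle + \langle \bar{Q}, \pi^{(t+1)} - \pi^{(t)}\rangle$, then using the simplex diameter bound $\|u - v\| \le \sqrt{2}$ and the boundedness of the averaged $Q$-values, I expect $\langle \bar{Q}_{i,s}^{\pi^{(t)}}, \pi_{i,s}^\star - \pi_{i,s}^{(t)}\rangle \le \frac{c_0}{\eta}\|\pi_{i,s}^{(t+1)} - \pi_{i,s}^{(t)}\|$ for an absolute constant $c_0$. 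Here I would exploit that $\pi_{i,s}^\star - \pi_{i,s}^{(t)}$ has zero coordinate sum, so the inner product is invariant under shifting $\bar{Q}_i$ by a state-dependent constant; through the MPG identity this lets me replace $\bar{Q}_i$ by the potential's advantage and retain only a $\phi_{\max}$ dependence rather than the individual reward magnitudes.

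Next I would transfer from the best-response occupancy measure $d_\rho^{\pi_i^\star,\pi_{-i}^{(t)}}$ to the measure $d_\nu^{\pi^{(t+1)}}$ used in \cref{prop:potential-improvement-euclidean}, choosing $\nu$ to be the minimizer in the definition of $\tilde{\kappa}_\rho$ and setting $\mu = \nu$ in the proposition. The crucial ordering is to apply Cauchy--Schwarz in $s$ \emph{first} and only then change measure, via $d_\rho^\pi(s) \le \tilde{\kappa}_\rho\,\nu(s)$ and $d_\nu^{\pi^{(t+1)}}(s) \ge (1-\gamma)\nu(s)$; this produces a factor $\sqrt{\tilde{\kappa}_\rho/(1-\gamma)}$ rather than $\tilde{\kappa}_\rho/(1-\gamma)$, which is precisely what yields the $\sqrt{\tilde{\kappa}_\rho}$ and $(1-\gamma)^{-2}$ powers in the final bound. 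Bounding the maximum over players through $\max_i \sqrt{b_i} \le \sqrt{\sum_i b_i}$ then gives a per-step gap bound of the form $\frac{c_0}{\eta(1-\gamma)}\sqrt{\frac{\tilde{\kappa}_\rho}{1-\gamma}}\,\sqrt{w_t}$, where $w_t \eqdef \sum_{s \in \ms} d_\nu^{\pi^{(t+1)}}(s)\sum_{i \in \mn}\|\pi_{i,s}^{(t+1)} - \pi_{i,s}^{(t)}\|^2$ is exactly the per-iteration movement controlled by the proposition.

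Finally, telescoping \cref{prop:potential-improvement-euclidean} over $t$ and using $|\Phi^\pi(\nu)| \le \phi_{\max}/(1-\gamma)$, the chosen step size makes the bracketed coefficient equal to $\frac{\phi_{\max}\sum_{i}|\mai|}{(1-\gamma)^2}$, so the total movement obeys $\sum_{t=1}^T w_t \le \frac{2(1-\gamma)}{\sum_{i}|\mai|}$. Averaging the per-step gaps and applying Cauchy--Schwarz over time, $\sum_t \sqrt{w_t} \le \sqrt{T\sum_t w_t}$, and then substituting $1/\eta = 4\phi_{\max}\sum_i|\mai|/(1-\gamma)$ collapses everything to the claimed $\mathcal{O}(1/\sqrt{T})$ rate; inverting gives the stated iteration complexity. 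I expect the main obstacle to lie in the bookkeeping of the gap bound across the second and third steps: obtaining the correct exponents on $\tilde{\kappa}_\rho$ and $(1-\gamma)$ hinges on performing the change of measure inside the square root, and the $\phi_{\max}$ (rather than $r_{\max}$) dependence requires simultaneously invoking the MPG identity and the zero-sum structure of the policy increment.
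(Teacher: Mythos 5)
Your proposal is correct and follows essentially the same route as the paper's proof: the performance difference lemma for the per-player gap, the projection variational inequality to bound $\langle \bar{Q}_{i,s}^{\pi^{(t)}}, \pi_{i,s}^\star - \pi_{i,s}^{(t)}\rangle$ by $\O(1/\eta)\,\|\pi_{i,s}^{(t+1)} - \pi_{i,s}^{(t)}\|$ (the paper's Lemma~\ref{lem:technical-for-regret}), the change of measure inside the square root yielding $\sqrt{\tilde{\kappa}_{\rho}/(1-\gamma)}$, and telescoping Proposition~\ref{prop:potential-improvement-euclidean} with Cauchy--Schwarz over time. The only cosmetic deviation is that you invoke the MPG identity to replace $\bar{Q}_i$ by the potential's advantage in the local estimate, whereas the paper simply bounds $\|\bar{Q}_i^{(t)}(s,\cdot)\| \leq \sqrt{\max_i |\mathcal{A}_i|}/(1-\gamma)$ and absorbs it via the step-size condition; both lead to the same constants.
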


\noindent\textbf{Comparison to Prior Work.} We provide a few comments regarding Theorem~\ref{thm:exact-pmd-euclidean} in the light of the existing literature: 
\begin{itemize}
\item Our result improves over the convergence rates provided in \cite{ding2022independent} in terms of the distribution mismatch coefficient~$\tilde{\kappa}_{\rho}$ which can scale with the state space size~$|\ms|$ in the worst case, reducing the dependence from~$\tilde{\kappa}_{\rho}^4$ to~$\tilde{\kappa}_{\rho}$ (see Table~\ref{tab:related-work}).

Moreover, this rate matches the~$\O \left( \frac{\tilde{\kappa}_{\rho} N \underset{i \in \mn}{\max} |\mai|}{(1-\gamma)^4 \epsilon^2} \right)$ regret bound provided for the specific identical interest case in~\cite{ding2022independent} (Theorem 2) where~$\phi_{\max} = 1$ since their rewards are bounded in~$[0,1]$. This closes the gap between the purely identical interest case and the more general MPG setting. However, this result leverages the fact that the potential function is an expected discounted cumulative sum of `state-wise' potential functions. While the result in \cite{ding2022independent} applies to a more general definition of MPGs which does not require the cumulative sum form of the total potential function, we improve over this result with this additional structure. Our result for the Euclidean setting mainly serves as a comparison point for our upcoming result for PMD under KL regularization.

\item Overall, we follow a proof strategy similar to \cite{ding2022independent,zhang-et-al22softmax}. 
However, we highlight that compared to \cite{ding2022independent}, we rely on a different decomposition of the policy improvement. 
The refined decomposition we use was considered in a similar way in~\cite{cen2022independent} in the context of stateless potential games when analyzing an independent natural gradient method with entropy regularization and in~\cite{zhang-et-al22softmax} when analyzing softmax gradient play. In the present work, we address the case of MPGs beyond the particular case of (stateless) potential games and we do not consider any \textit{additional} entropy regularization. Importantly, our goal is to learn Nash equilibria for an unregularized game and our iteration complexity is of the order~$\mathcal{O}(\epsilon^{-2})$ compared to~$\mathcal{O}(\epsilon^{-4})$ for the algorithm proposed in \cite{cen2022independent}. 
\end{itemize}

Notice that the iteration complexity in Theorem~\ref{thm:exact-pmd-euclidean} scales linearly with the number of players~$N$. In the next section, we analyze the Nash regret incurred by PMD using KL regularization instead of Euclidean regularization. In this case, we show that the dependence on the number of players~$N$ can be improved to scale with~$\sqrt{N}$. The resulting Natural Policy Gradient algorithm requires a different treatment, which we undertake in the following section.

\subsection{Analysis of PMD with KL Regularization}

It has been shown in \cite{fox-et-al22indep-npg-asymp} that the iterates of independent Natural Policy Gradient with softmax policy parametrization converge to fixed point policies of the multiplicative weights algorithm (namely~\eqref{eq:pmd-KL}). 
This asymptotic result required an assumption on these fixed points. Before stating this assumption which we will also make to guarantee convergence, we recall that the fixed points of~\eqref{eq:pmd-KL} are the policies~$\pi \in \Pi$ satisfying either~$\pi_i(a_i|s) = 0$ or~$\bar{A}_i^{\pi}(s,a_i) = 0$ for every~$s \in \ms, i \in \mn, a_i \in \mai$ as can be readily seen in~\eqref{eq:pmd-KL}.  

\begin{assumption}
\label{hyp:isolated-fixed-points}
The fixed points of~\eqref{eq:pmd-KL} are isolated. 
\end{assumption}
Notice that Assumption~\ref{hyp:isolated-fixed-points} has been made in several prior works \cite{fox-et-al22indep-npg-asymp,zhang-et-al22softmax,sun-et-al23provably}. 

Similarly to the previous section, we start by quantifying the potential improvement for our algorithm. 

\begin{proposition}[Potential Improvement - KL PMD]
\label{prop:natural-grad-policy-improvement}
Under Assumption~\ref{hyp:positive-discounted-visit-distrib}, for any~$\mu \in \Delta(\mathcal{S}), t \geq 1,$ we have 
\begin{multline*} 
\Phi^{\pi^{(t+1)}}(\mu) - \Phi^{\pi^{(t)}}(\mu) \geq \left(\frac{1}{\eta} - \frac{\phi_{\max} \sqrt{N}}{(1-\gamma)^2}\right) \cdot\\ \sum_{s \in \mathcal{S}} d_{\mu}^{\pi^{(t+1)}}(s) \text{KL}(\pi^{(t+1)}_s||\, \pi^{(t)}_s)
+ \frac{1}{\eta} \sum_{s \in \mathcal{S}} d_{\mu}^{\pi^{(t+1)}}(s) \sum_{i=1}^N \log Z_{t}^{i,s}. 
\end{multline*}
Moreover, if~$\eta \leq \frac{(1-\gamma)^2}{\phi_{\max} \sqrt{N}}$, then
\begin{equation}
\label{eq:potential-improvement-KL}
\Phi^{\pi^{(t+1)}}(\mu) - \Phi^{\pi^{(t)}}(\mu) \geq 
\frac{1}{\eta} \sum_{s \in \mathcal{S}} d_{\mu}^{\pi^{(t+1)}}(s) \sum_{i=1}^N \log Z_{t}^{i,s}\,.
\end{equation}
\end{proposition}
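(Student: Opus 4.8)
\noindent\textbf{Proof plan (of Proposition~\ref{prop:natural-grad-policy-improvement}).}
The plan is to follow the same skeleton as the Euclidean case (Proposition~\ref{prop:potential-improvement-euclidean}) but to replace the squared-distance bookkeeping by the entropy geometry. First I would apply the performance difference lemma (\cref{lem:performance-difference-single-agent}) to the potential, writing $\Phi^{\pi^{(t+1)}}(\mu) - \Phi^{\pi^{(t)}}(\mu) = \frac{1}{1-\gamma}\sum_{s} d_{\mu}^{\pi^{(t+1)}}(s)\sum_{a}(\pi^{(t+1)}(a|s) - \pi^{(t)}(a|s))Q_{\phi}^{\pi^{(t)}}(s,a)$, and then insert the same telescoping product decomposition of the joint policy change over agents used in the Euclidean proof, which turns the inner sum into $\sum_{i=1}^N\langle \pi_{i,s}^{(t+1)} - \pi_{i,s}^{(t)}, \hat{Q}_{\phi,i}^{(t)}\rangle$, where $\hat{Q}_{\phi,i}^{(t)}$ averages $Q_{\phi}^{\pi^{(t)}}$ against the mixed product policy $\tilde{\pi}_{-i}^{(t)}$. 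I track the $\frac{1}{1-\gamma}$ factor from the performance difference lemma throughout.

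Next I would bring in the quantity that actually drives the multiplicative-weights update. The MPG property makes the gradients of $V_i$ and $\Phi$ with respect to $\pi_i$ coincide, so the averaged potential advantage equals the individual averaged advantage $\bar{A}_i^{\pi^{(t)}}$; hence replacing $\hat{Q}_{\phi,i}^{(t)}$ by $\bar{A}_i^{\pi^{(t)}}$ in each inner product (the constant shift is annihilated since $\pi_{i,s}^{(t+1)} - \pi_{i,s}^{(t)}$ sums to zero) costs only an interference error treated below, and the leading term becomes $\sum_i\langle \pi_{i,s}^{(t+1)} - \pi_{i,s}^{(t)}, \bar{A}_i^{\pi^{(t)}}\rangle$. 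Taking logarithms in the closed form~\eqref{eq:pmd-KL}, written equivalently with $\bar{A}_i^{\pi^{(t)}}$ (for which $\langle \bar{A}_i^{\pi^{(t)}}, \pi_{i,s}^{(t)}\rangle = 0$), gives the pointwise identity $\eta\,\bar{A}_i^{\pi^{(t)}}(s,a_i) = \log\frac{\pi_i^{(t+1)}(a_i|s)}{\pi_i^{(t)}(a_i|s)} + \log Z_t^{i,s}$. Pairing this with $\pi_{i,s}^{(t+1)}$ and with $\pi_{i,s}^{(t)}$ separately yields the key exact identity $\langle \bar{A}_i^{\pi^{(t)}}, \pi_{i,s}^{(t+1)} - \pi_{i,s}^{(t)}\rangle = \frac{1}{\eta}\,\text{KL}(\pi_{i,s}^{(t+1)}\|\pi_{i,s}^{(t)}) + \frac{1}{\eta}\log Z_t^{i,s}$, where I have used the complementary relation $\text{KL}(\pi_{i,s}^{(t)}\|\pi_{i,s}^{(t+1)}) = \log Z_t^{i,s}$. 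Summing over agents (so that $\sum_i\text{KL}(\pi_{i,s}^{(t+1)}\|\pi_{i,s}^{(t)}) = \text{KL}(\pi_s^{(t+1)}\|\pi_s^{(t)})$ by the product structure) and over states weighted by $d_\mu^{\pi^{(t+1)}}$ produces exactly the $\frac1\eta$-weighted $\text{KL}$ term and the $\frac1\eta$-weighted $\log Z_t^{i,s}$ term of the statement.

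The main obstacle is the interference error from the mismatch $\hat{Q}_{\phi,i}^{(t)} - \bar{Q}_{\phi,i}^{(t)}$, i.e.\ from agents updating simultaneously. Telescoping $\tilde{\pi}_{-i}^{(t)} - \pi_{-i}^{(t)}$ over the already-updated opponents $k<i$ and using $\|Q_\phi^{\pi^{(t)}}\|_\infty\le\frac{\phi_{\max}}{1-\gamma}$ bounds this error by $\frac{\phi_{\max}}{(1-\gamma)^2}\sum_s d_\mu^{\pi^{(t+1)}}(s)\sum_i\|\pi_{i,s}^{(t+1)} - \pi_{i,s}^{(t)}\|_1\sum_{k<i}\|\pi_{k,s}^{(t+1)} - \pi_{k,s}^{(t)}\|_1$. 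The delicate point — and the whole reason KL beats Euclidean — is to bound this by $\frac{\phi_{\max}\sqrt N}{(1-\gamma)^2}\sum_s d_\mu^{\pi^{(t+1)}}(s)\,\text{KL}(\pi_s^{(t+1)}\|\pi_s^{(t)})$: here Pinsker's inequality controls each $\|\pi_{i,s}^{(t+1)} - \pi_{i,s}^{(t)}\|_1^2$ by $2\,\text{KL}(\pi_{i,s}^{(t+1)}\|\pi_{i,s}^{(t)})$ \emph{without} any $\sqrt{|\mathcal{A}_i|}$ factor (unlike the $\ell_2\to\ell_1$ conversion forced in the Euclidean analysis, which is what re-introduces $\sum_i|\mathcal{A}_i|$ there), and a Cauchy--Schwarz over the agent index arranged to retain a single $\sqrt{N}$ factor is what yields $\sqrt N$ rather than the naive $N$ one gets from bounding each pairwise interaction separately. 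Making this aggregation tight is the crux of the argument. Combining the leading identity with this error bound gives the first displayed inequality. The second inequality is then immediate: under $\eta\le\frac{(1-\gamma)^2}{\phi_{\max}\sqrt N}$ the coefficient $\frac1\eta - \frac{\phi_{\max}\sqrt N}{(1-\gamma)^2}$ is nonnegative, and since each $\text{KL}(\pi_s^{(t+1)}\|\pi_s^{(t)})\ge0$ that term may be dropped, leaving only the $\log Z_t^{i,s}$ contribution.
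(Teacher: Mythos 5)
Your overall skeleton matches the paper's proof: performance difference lemma for the potential, the telescoping product decomposition over agents, the exact identity for the leading term obtained by taking logarithms in the update rule (which yields precisely $\frac{1}{\eta}\,\text{KL}(\pi^{(t+1)}_s\|\pi^{(t)}_s) + \frac{1}{\eta}\sum_i \log Z_t^{i,s}$ after using $\langle \bar{A}_i^{(t)}(s,\cdot), \pi_{i,s}^{(t)}\rangle = 0$ and KL additivity), and the final observation that the KL term can be dropped when the step size is small enough. All of that is correct and is what the paper does.

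However, there is a genuine gap in your treatment of the interference term, and it sits exactly where you yourself locate ``the crux.'' You propose to telescope $\tilde{\pi}_{-i,s}^{(t)} - \pi_{-i,s}^{(t)}$ into the sum $\sum_{k<i}\|\pi_{k,s}^{(t+1)} - \pi_{k,s}^{(t)}\|_1$ and then apply Pinsker to each agent individually. With $x_i \eqdef \|\pi_{i,s}^{(t+1)} - \pi_{i,s}^{(t)}\|_1$ and $k_i \eqdef \text{KL}(\pi_{i,s}^{(t+1)}\|\pi_{i,s}^{(t)})$, the resulting quantity $\sum_i x_i \sum_{k<i} x_k \approx \frac{1}{2}(\sum_i x_i)^2$ cannot be bounded by $C\sqrt{N}\sum_i k_i$ for any constant $C$: taking all $x_i$ equal with $k_i = x_i^2/2$ gives a left-hand side of order $N^2 k$ against a right-hand side of order $\sqrt{N}\cdot N k$, so this route provably yields a factor $N$, not $\sqrt{N}$. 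No rearrangement of Cauchy--Schwarz over the agent index can repair this once the per-agent $\ell_1$ telescoping has been performed. The missing idea in the paper's proof is to \emph{not} telescope: keep the deviation of the already-updated opponents as a single total-variation distance $\text{d}_{\text{TV}}(\tilde{\pi}_{-i,s}^{(t)}, \pi_{-i,s}^{(t)})$ between product distributions, apply Young's inequality with the asymmetric weights $\frac{1}{2\sqrt{N}}$ and $\frac{\sqrt{N}}{2}$ to the product $\text{d}_{\text{TV}}(\tilde{\pi}_{-i,s}^{(t)}, \pi_{-i,s}^{(t)})\cdot\|\pi_{i,s}^{(t+1)} - \pi_{i,s}^{(t)}\|_1$, and then invoke Pinsker at the \emph{joint} level: $\text{d}_{\text{TV}}(\tilde{\pi}_{-i,s}^{(t)}, \pi_{-i,s}^{(t)})^2 \leq \text{KL}(\tilde{\pi}_{-i,s}^{(t)}\|\pi_{-i,s}^{(t)}) = \sum_{j<i}\text{KL}(\pi_{j,s}^{(t+1)}\|\pi_{j,s}^{(t)}) \leq \text{KL}(\pi_s^{(t+1)}\|\pi_s^{(t)})$. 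It is the additivity of KL for products, combined with Pinsker applied to the joint distribution rather than to each marginal, that converts ``$\sum_{j<i}\sqrt{k_j}$'' into ``$\sqrt{\sum_{j<i} k_j}$'' and delivers the $\sqrt{N}$. Summing the two Young terms over $i$ then gives $\frac{\sqrt{N}}{2}\text{KL}(\pi_s^{(t+1)}\|\pi_s^{(t)}) + \frac{\sqrt{N}}{2}\sum_i \text{KL}(\pi_{i,s}^{(t+1)}\|\pi_{i,s}^{(t)}) = \sqrt{N}\,\text{KL}(\pi_s^{(t+1)}\|\pi_s^{(t)})$, which is the bound you need.
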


Note that the potential improvement bound now features a dependence on~$\sqrt{N}$ instead of~$N$ in comparison to Proposition~\ref{prop:potential-improvement-euclidean}. Crucially, this improvement allows to take larger step sizes for PMD with KL regularization which leads to our improved iteration complexity. 
Using Proposition~\ref{prop:natural-grad-policy-improvement}, we connect the right-hand side of~\eqref{eq:potential-improvement-KL} to the Nash regret using Lemma~21 in \cite{zhang-et-al22softmax} which is adapted and reported in Lemma~\ref{lem:lemma21-zhang} in the appendix.
We obtain the following main result whose complete proof is deferred to Appendix~\ref{app:proof-thm-KL}. 

\begin{theorem}[PMD with KL Regularization]
\label{thm:exact-pmd-natural-gradient}
Let Assumptions~\ref{hyp:positive-discounted-visit-distrib} and~\ref{hyp:isolated-fixed-points} hold. 
Set~$\eta = \frac{1-\gamma}{2\phi_{\max} \sqrt{N}}\,.$ Then the individual policies~$(\pi_i^t)_{i \in \mn, t \in \{1, \cdots, T\}}$ obtained from running Algorithm~\eqref{eq:pmd-KL} for~$T \geq 1$ iterations with constant step size~$\eta$ satisfy the following regret guarantee, 
\begin{equation}
\text{Nash-regret}(T) \leq \sqrt{\frac{12 \phi_{\max}^2 \tilde{\kappa}_{\rho} \sqrt{N}}{ (1-\gamma)^4 c\, T}}\,,
\end{equation}
where~$c \triangleq \underset{i \in \mn}{\min}\underset{t \in \bN}{\inf}\, \underset{s \in \mathcal{S}}{\min} \underset{a_i^* \in \underset{a_i \in \mai}{\argmax}\bar{Q}^{\pi^{(t)}}_i(s,a_i)}{\sum}\,   \pi_i^{(t)}(a_i^*|s) >0 \,.$
The number of iterations to achieve~$\epsilon$-Nash regret is: 
\begin{equation*}
T \geq \frac{12 \phi_{\max}^2 \tilde{\kappa}_{\rho} \sqrt{N} }{(1-\gamma)^4 c\, \epsilon^{2}}\,.
\end{equation*}
\end{theorem}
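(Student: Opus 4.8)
The plan is to bound the per-iteration worst-case Nash gap by a weighted sum of $\sqrt{\log Z_t^{i,s}}$ terms, telescope the potential improvement of \cref{prop:natural-grad-policy-improvement}, and apply Cauchy--Schwarz twice. First, fix an iterate $\pi^{(t)}$ and a player $i$, and view the opponents' policies $\pi_{-i}^{(t)}$ as frozen so that player $i$ faces a single-agent MDP whose advantage function is precisely $\bar{A}_i^{\pi^{(t)}}$. Applying the performance difference lemma (\cref{lem:performance-difference-single-agent}) to this induced MDP and bounding $\sum_{a_i}\pi_i^\star(a_i|s)\bar{A}_i^{\pi^{(t)}}(s,a_i)\leq\max_{a_i}\bar{A}_i^{\pi^{(t)}}(s,a_i)$, where $\pi_i^\star$ is player $i$'s best response, gives
$\max_{\pi_i'}V_i^{\pi_i',\pi_{-i}^{(t)}}(\rho)-V_i^{\pi^{(t)}}(\rho)\leq\frac{1}{1-\gamma}\sum_{s}d_\rho^{\pi_i^\star,\pi_{-i}^{(t)}}(s)\,\max_{a_i}\bar{A}_i^{\pi^{(t)}}(s,a_i)$.

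The crux is then to control $\max_{a_i}\bar{A}_i^{\pi^{(t)}}(s,a_i)$ by $\log Z_t^{i,s}$, which is where \cref{lem:lemma21-zhang} enters. Writing $Z_t^{i,s}=\sum_{a_i}\pi_i^{(t)}(a_i|s)\exp(\eta\bar{Q}_i^{\pi^{(t)}}(s,a_i))$ and exploiting that the current policy places probability mass at least $c$ on the $\argmax$ actions of $\bar{Q}_i^{\pi^{(t)}}(s,\cdot)$, a second-order log-sum-exp estimate yields a bound of the shape $\max_{a_i}\bar{A}_i^{\pi^{(t)}}(s,a_i)\leq\frac{1}{\eta}\sqrt{2\log Z_t^{i,s}/c}$. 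The emergence of the square root, rather than a linear dependence, is exactly what will later produce the improved $N^{1/4}$ rate in the regret (hence $\sqrt{N}$ in the iteration complexity), and it is where the constant $c$ from \cref{hyp:isolated-fixed-points} is consumed.

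Next I would substitute this estimate into the gap bound and apply Cauchy--Schwarz over states, using $\sum_s d_\rho^{\pi_i^\star,\pi_{-i}^{(t)}}(s)=1$, to pull the square root outside the state sum. I then change measure from the best-response occupancy $d_\rho^{\pi_i^\star,\pi_{-i}^{(t)}}$ to $d_\mu^{\pi^{(t+1)}}$ by choosing $\mu=\nu^\star$, the minimizer defining $\tilde{\kappa}_{\rho}$, and using $d_{\nu^\star}^{\pi^{(t+1)}}(s)\geq(1-\gamma)\nu^\star(s)$ together with $d_\rho^{\pi_i^\star,\pi_{-i}^{(t)}}(s)\leq\tilde{\kappa}_{\rho}\,\nu^\star(s)$ to control the density ratio. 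Bounding $\log Z_t^{i,s}\leq\sum_{j=1}^N\log Z_t^{j,s}$ (every term is nonnegative, by Jensen against $\pi_i^{(t)}$) makes the right-hand side match, up to constants, the quantity $W_t\triangleq\sum_{s}d_{\nu^\star}^{\pi^{(t+1)}}(s)\sum_{j=1}^N\log Z_t^{j,s}$ that appears in \eqref{eq:potential-improvement-KL}; taking $\max_i$ preserves this since the resulting bound no longer depends on $i$.

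Finally, I average the per-iteration bound over $t=1,\dots,T$ and apply Cauchy--Schwarz (equivalently, concavity of $\sqrt{\cdot}$) to exchange $\frac1T\sum_t\sqrt{W_t}$ for $\sqrt{\frac1T\sum_t W_t}$. Invoking \cref{prop:natural-grad-policy-improvement} with $\mu=\nu^\star$ in the step-size regime where its $\text{KL}$-coefficient is nonnegative, so that \eqref{eq:potential-improvement-KL} holds, the sum $\sum_t W_t$ telescopes into $\eta$ times the total potential increase, itself at most $2\Phi_{\max}\leq 2\phi_{\max}/(1-\gamma)$. Plugging in $\eta=\frac{1-\gamma}{2\phi_{\max}\sqrt{N}}$ and collecting the factors of $\phi_{\max}$, $\eta$, $\tilde{\kappa}_{\rho}$, $c$ and $N$ gives the stated regret, and squaring yields the iteration complexity. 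In my view the one genuinely delicate point is not this chain of inequalities but the justification that $c>0$ uniformly in $t$: this rests on \cref{hyp:isolated-fixed-points} combined with asymptotic convergence of the multiplicative-weights iterates to a fixed point, ensuring the $\argmax$ probability mass never collapses along the trajectory.
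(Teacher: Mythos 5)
Your proposal follows essentially the same route as the paper's proof: the per-player performance difference lemma, the bound on $\max_{a_i}\bar{A}_i^{\pi^{(t)}}(s,a_i)$ in terms of $\log Z_t^{i,s}$ via the constant $c$ (which the paper imports in aggregated squared form from Lemma~21 of \cite{zhang-et-al22softmax}), the change of measure through $\tilde{\kappa}_{\rho}$, telescoping the potential improvement of \cref{prop:natural-grad-policy-improvement}, and a final application of Jensen's inequality. One small correction to your commentary only: the improved $\sqrt{N}$ dependence does not originate from the square root in the log-sum-exp estimate (that structure is already present in prior work), but from the tighter control of the cross term in \cref{prop:natural-grad-policy-improvement}, which is what permits the larger step size $\eta \propto 1/\sqrt{N}$.
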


Some comments are in order about Theorem~\ref{thm:exact-pmd-natural-gradient}: 

\begin{itemize}

\item First and foremost, observe the improved~$\sqrt{N}$ dependence on the number players in the theorem instead of~$N$ in Theorem~\ref{thm:exact-pmd-euclidean} which uses Euclidean regularization.

\item It is also worth noting that there is no dependence on the size of the action space anymore. Furthermore, the dependence on the state space size is only indirect via the minimax distribution mismatch coefficient~$\tilde{\kappa}_{\rho}$. This offers the possibility of further extension of the result to the continuous state space setting. 

\item The positivity of the constant~$c$ has been previously shown in \cite{zhang-et-al22softmax} using Assumption~\ref{hyp:isolated-fixed-points}. 
Relaxing this assumption and removing the dependence on the constant~$c$ (which can be very small) is an interesting question left for future work. 

\item The proof of this result uses similar arguments to the proof of Theorem 5 in~\cite{zhang-et-al22softmax}. Our improved result stems from a tighter policy improvement lemma than Lemma~20 in \cite{zhang-et-al22softmax}. More precisely, we control an error term differently using similar techniques to the proofs in~\cite{cen2022independent} which is concerned with stateless (i.e. static) potential games enhanced with entropy regularization. However, notice here that (a)~we consider MPGs which have stateless potential games as a particular case and (b)~we are interested in natural policy gradient (PMD with KL regularization) without the additional entropic regularization considered in~\cite{cen2022independent}.  

\item In contrast to the single-agent setting where a unified analysis of PMD for different mirror maps has been developed \cite{xiao22jmlr}, our analysis is tailored to the choice of the mirror map. Whether a unified analysis can be performed remains an open question.  
\end{itemize}

\section{Conclusion and Future Work}

In this work, we introduced an independent PMD algorithm for MPGs which subsumes as particular cases the projected Q-ascent and the celebrated Natural Policy Gradient algorithms, corresponding to PMD with Euclidean and entropic mirror maps respectively. We provided Nash regret guarantees for both algorithms showing an improved dependence on the number of agents when using Natural Policy Gradient. Notably, the Nash regret guarantee for the latter algorithm is independent of the size of the action space in contrast with the iteration complexity for PMD with Euclidean regularization. 

There are several interesting directions for future research: 
\begin{itemize}
\item Relaxing the assumption of isolated stationary policies is an interesting question for future work. Regularization might offer a way to get rid of this assumption as well as the constant~$c$ as pursued in \cite{zhang-et-al22softmax}. Achieving this while also obtaining our improved dependence on the number of players is an interesting open question.   
\item Obtaining guarantees on the last-iterate of the algorithm instead of our average Nash regret result is an interesting question to address to ensure that implementing the last-iterate policies indeed leads to Nash equilibria. 
\item Extending the result to the stochastic setting where average Q-functions are estimated from the immediate rewards observed by the agents, i.e. designing payoff-based methods is a fruitful avenue for future work. While the extension for the Euclidean case is straightforward, obtaining the result for the Natural Policy Gradient seems more delicate as it is not immediate to show that the constant~$c$ is indeed positive in the stochastic setting. 
\item While our focus in this paper was on improving the dependence on the number of agents, scaling the algorithm to large state spaces is also an important challenge. Extending our results in this direction using function approximation merits further investigation. Some results were recently discussed along these lines for PMD with Euclidean regularization \cite{ding2022independent}.  
\end{itemize}

\bibliography{refs}

\begin{thebibliography}{10}

\bibitem{aydin-eksin23networked}
S.~Ayd{\i}n and C.~Eksin.
\newblock Networked policy gradient play in markov potential games.
\newblock In {\em IEEE International Conference on Acoustics, Speech and Signal
  Processing (ICASSP)}, pages 1--5, 2023.

\bibitem{aydin-eksin23timevarying}
S.~Aydin and C.~Eksin.
\newblock Policy gradient play over time-varying networks in markov potential
  games.
\newblock In {\em 2023 62nd IEEE Conference on Decision and Control (CDC)},
  pages 1997--2002, 2023.

\bibitem{cen2022independent}
S.~Cen, F.~Chen, and Y.~Chi.
\newblock Independent natural policy gradient methods for potential games:
  Finite-time global convergence with entropy regularization.
\newblock In {\em 2022 IEEE 61st Conference on Decision and Control (CDC)},
  pages 2833--2838, 2022.

\bibitem{daskalakis2009complexity}
C.~Daskalakis, P.~W. Goldberg, and C.~H. Papadimitriou.
\newblock The complexity of computing a nash equilibrium.
\newblock {\em SIAM Journal on Computing}, 39(1):195--259, 2009.

\bibitem{ding2022independent}
D.~Ding, C.-Y. Wei, K.~Zhang, and M.~Jovanovic.
\newblock Independent {Policy} {Gradient} for {Large}-{Scale} {Markov}
  {Potential} {Games}: {Sharper} {Rates}, {Function} {Approximation}, and
  {Game}-{Agnostic} {Convergence}.
\newblock In {\em Proceedings of the 39th {International} {Conference} on
  {Machine} {Learning}}, pages 5166--5220. PMLR, June 2022.
\newblock ISSN: 2640-3498.

\bibitem{fox-et-al22indep-npg-asymp}
R.~Fox, S.~M. Mcaleer, W.~Overman, and I.~Panageas.
\newblock Independent natural policy gradient always converges in markov
  potential games.
\newblock In {\em Proceedings of The 25th International Conference on
  Artificial Intelligence and Statistics}, 2022.

\bibitem{guo-et-al23alpha-mpgs}
X.~Guo, X.~Li, C.~Maheshwari, S.~Sastry, and M.~Wu.
\newblock Markov alpha-potential games: Equilibrium approximation and regret
  analysis.
\newblock {\em arXiv preprint arXiv:2305.12553}, 2023.

\bibitem{heliou-et-al17}
A.~Heliou, J.~Cohen, and P.~Mertikopoulos.
\newblock Learning with bandit feedback in potential games.
\newblock {\em Advances in Neural Information Processing Systems}, 30, 2017.

\bibitem{jordan-barakat-he24}
P.~Jordan, A.~Barakat, and N.~He.
\newblock Independent learning in constrained {M}arkov potential games.
\newblock In {\em Proceedings of The 27th International Conference on
  Artificial Intelligence and Statistics}, 2024.

\bibitem{lan23pmd}
G.~Lan.
\newblock Policy mirror descent for reinforcement learning: Linear convergence,
  new sampling complexity, and generalized problem classes.
\newblock {\em Mathematical programming}, 198(1):1059--1106, 2023.

\bibitem{leonardos2021global}
S.~Leonardos, W.~Overman, I.~Panageas, and G.~Piliouras.
\newblock Global convergence of multi-agent policy gradient in markov potential
  games.
\newblock In {\em International Conference on Learning Representations}, 2022.

\bibitem{maheshwari-et-al22}
C.~Maheshwari, M.~Wu, D.~Pai, and S.~Sastry.
\newblock Independent and decentralized learning in markov potential games.
\newblock {\em arXiv preprint arXiv:2205.14590}, 2022.

\bibitem{marden12}
J.~R. Marden.
\newblock State based potential games.
\newblock {\em Automatica}, 48(12):3075--3088, 2012.

\bibitem{monderer-shapley96}
D.~Monderer and L.~S. Shapley.
\newblock Potential games.
\newblock {\em Games and economic behavior}, 14(1):124--143, 1996.

\bibitem{narasimha-et-al22}
D.~Narasimha, K.~Lee, D.~Kalathil, and S.~Shakkottai.
\newblock Multi-agent learning via markov potential games in marketplaces for
  distributed energy resources.
\newblock In {\em 2022 IEEE 61st Conference on Decision and Control (CDC)},
  pages 6350--6357. IEEE, 2022.

\bibitem{ozdaglar-sayin-zhang21survey}
A.~Ozdaglar, M.~O. Sayin, and K.~Zhang.
\newblock Independent learning in stochastic games.
\newblock {\em Invited chapter for the International Congress of Mathematicians
  2022 (ICM 2022)}, 2022.

\bibitem{rockafellar-et-al70cvx-analysis}
R.~T. Rockafellar.
\newblock {\em Convex Analysis}.
\newblock Princeton University Press, Princeton, 1970.

\bibitem{shapley53}
L.~S. Shapley.
\newblock Stochastic games.
\newblock {\em Proceedings of the national academy of sciences},
  39(10):1095--1100, 1953.

\bibitem{song-mei-bai22}
Z.~Song, S.~Mei, and Y.~Bai.
\newblock When can we learn general-sum markov games with a large number of
  players sample-efficiently?
\newblock In {\em International Conference on Learning Representations}, 2022.

\bibitem{sun-et-al23provably}
Y.~Sun, T.~Liu, R.~Zhou, P.~Kumar, and S.~Shahrampour.
\newblock Provably fast convergence of independent natural policy gradient for
  markov potential games.
\newblock In {\em Thirty-seventh Conference on Neural Information Processing
  Systems}, 2023.

\bibitem{xiao22jmlr}
L.~Xiao.
\newblock On the convergence rates of policy gradient methods.
\newblock {\em Journal of Machine Learning Research}, 23(282):1--36, 2022.

\bibitem{zhan-et-al23pmd-reg}
W.~Zhan, S.~Cen, B.~Huang, Y.~Chen, J.~D. Lee, and Y.~Chi.
\newblock Policy mirror descent for regularized reinforcement learning: A
  generalized framework with linear convergence.
\newblock {\em SIAM Journal on Optimization}, 33(2):1061--1091, 2023.

\bibitem{zhang-et-al22softmax}
R.~Zhang, J.~Mei, B.~Dai, D.~Schuurmans, and N.~Li.
\newblock On the global convergence rates of decentralized softmax gradient
  play in markov potential games.
\newblock {\em Advances in Neural Information Processing Systems},
  35:1923--1935, 2022.

\bibitem{zhang-et-al21grad-play}
R.~C. Zhang, Z.~Ren, and N.~Li.
\newblock Gradient play in stochastic games: Stationary points and local
  geometry.
\newblock {\em IFAC-PapersOnLine}, 55(30):73--78, 2022.
\newblock 25th International Symposium on Mathematical Theory of Networks and
  Systems MTNS 2022.

\bibitem{zhou-et-al23nmpgs}
Z.~Zhou, Z.~Chen, Y.~Lin, and A.~Wierman.
\newblock Convergence rates for localized actor-critic in networked {M}arkov
  potential games.
\newblock In {\em Proceedings of the Thirty-Ninth Conference on Uncertainty in
  Artificial Intelligence}, pages 2563--2573, 2023.

\end{thebibliography}
\bibliographystyle{abbrv}

\section{ACKNOWLEDGMENTS}
We thank the reviewers for their useful comments. This work was supported by an ETH Foundations of Data Science (ETH-FDS) postdoctoral fellowship. 


\onecolumn

\section{Proof of Theorem~\ref{thm:exact-pmd-euclidean} (Euclidean Regularization)}
\label{app:proof-thm-pmd-euclidean}

We divide the proof into two main steps: 
\begin{enumerate}

    \item The first step consists in estimating the policy improvement in terms of the potential function, i.e., lower bounding the quantity~$\Phi^{\pi^{(t+1)}}(\mu) - \Phi^{\pi^{(t)}}(\mu)$ where~$\pi^{(t+1)}$ is the joint policy of the agents at time~$t+1$ after one step of PMD from the joint policy~$\pi^{(t)}.$ 

    \item In the second step, we relate the Nash regret to the policy improvement controlled in the first step and conclude.
\end{enumerate}

Let~$\mu$ be any initial distribution over the state space~$\mathcal{S}\,.$ 
Recall the definition of the total potential function~$\Phi: \Pi \to \mathbb{R}$: 
\begin{equation}
\Phi^{\pi}(\mu) \triangleq \mathbb{E}_{\mu, \pi}\left[\sum_{t=0}^{\infty} \gamma^t \phi(s_t,a_t)\right]\,.
\end{equation}

Define also the $Q$-function induced by the potential function~$\phi$ and a given policy~$\pi$ over the joint state and action space for every state-action pair~$(s,a) \in \mathcal{S} \times \mathcal{A}$ as follows: 
\begin{equation}
Q_{\phi}^{\pi}(s,a) \triangleq \mathbb{E}_{\pi}\left[\sum_{t=0}^{\infty} \gamma^t \phi(s_t,a_t) | s_0 = s, a_0 = a\right]\,.
\end{equation}

\begin{proposition}
\label{lem:policy-improv-potential-fun}
For any initial distribution~$\mu \in \Delta(\mathcal{S})$, 
\begin{equation*}
\Phi^{\pi^{(t+1)}}(\mu) - \Phi^{\pi^{(t)}}(\mu) \geq \left( \frac{1}{2\eta(1-\gamma)} - \frac{2 \phi_{\max} \sum_{i=1}^N |\mathcal{A}_i|}{(1-\gamma)^3} \right) \sum_{s \in \mathcal{S}} d_{\mu}^{\pi^{(t+1)}}(s) \sum_{i=1}^N \|\pi_i^{(t+1)}(\cdot|s) - \pi_i^{(t)}(\cdot|s)\|^2\,.
\end{equation*}
\end{proposition}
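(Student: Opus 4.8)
The plan is to follow the two-step strategy of the sketch: first rewrite the potential gap as an inner product against the policy change via the performance difference lemma, and then use the product structure of the joint policies together with the optimality condition of the PMD update to extract a positive improvement, while carefully controlling the coupling error caused by the \emph{simultaneity} of the agents' updates.

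First I would apply the performance difference lemma to the potential $\Phi$, treating $\phi$ as a reward and $Q_\phi^{\pi^{(t)}}$ as its associated $Q$-function, to obtain
\[
\Phi^{\pi^{(t+1)}}(\mu) - \Phi^{\pi^{(t)}}(\mu) = \frac{1}{1-\gamma}\sum_{s\in\mathcal{S}}\sum_{a\in\mathcal{A}} d_\mu^{\pi^{(t+1)}}(s)\,\big(\pi^{(t+1)}(a|s)-\pi^{(t)}(a|s)\big)\,Q_\phi^{\pi^{(t)}}(s,a).
\]
Then I would insert the telescoping decomposition of the joint policy difference from the sketch, so that the inner sum over joint actions becomes a sum of per-agent inner products $\langle \pi_{i,s}^{(t+1)}-\pi_{i,s}^{(t)},\, \widehat{Q}_{i,s}^{(t)}\rangle$, where $\widehat{Q}_{i,s}^{(t)}(a_i) \triangleq \sum_{a_{-i}} \tilde{\pi}_{-i}^{(t)}(a_{-i}|s)\,Q_\phi^{\pi^{(t)}}(s,a_i,a_{-i})$ is the potential $Q$-function averaged against the hybrid opponent profile that uses the \emph{updated} policies for agents $j<i$ and the \emph{old} ones for $j>i$.

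The heart of the argument is to split $\widehat{Q}_{i,s}^{(t)}$ into the contribution of the true old profile $\pi_{-i,s}^{(t)}$ plus an error. For the main part, averaging $Q_\phi^{\pi^{(t)}}$ against $\pi_{-i,s}^{(t)}$ coincides with $\bar{Q}_i^{\pi^{(t)}}(s,\cdot)$ up to a constant in $a_i$ — this is the MPG identity, obtained by matching the policy gradients $\partial_{\pi_i} V_i = \partial_{\pi_i}\Phi$ — and that constant cancels because $\sum_{a_i}(\pi_i^{(t+1)}(a_i|s)-\pi_i^{(t)}(a_i|s))=0$. I can therefore invoke the optimality (three-point) inequality for the Euclidean update \eqref{eq:pmd-euclidean} with comparator $\pi_{i,s}^{(t)}$, which gives $\langle \pi_{i,s}^{(t+1)}-\pi_{i,s}^{(t)}, \bar{Q}_{i,s}^{\pi^{(t)}}\rangle \ge \tfrac{1}{\eta}\|\pi_{i,s}^{(t+1)}-\pi_{i,s}^{(t)}\|^2$; retaining half of this accounts for the positive coefficient $\tfrac{1}{2\eta(1-\gamma)}$. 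For the error part, since both $\tilde{\pi}_{-i,s}^{(t)}$ and $\pi_{-i,s}^{(t)}$ are product measures differing only in the factors $j<i$, I would bound their $\ell_1$ distance by $\sum_{j<i}\|\pi_{j,s}^{(t+1)}-\pi_{j,s}^{(t)}\|_1$ (product-measure telescoping), combine with $\|Q_\phi^{\pi^{(t)}}\|_\infty \le \phi_{\max}/(1-\gamma)$ through H\"older, and finally pass from $\ell_1$ to $\ell_2$ and apply Cauchy--Schwarz across agents, using $(\sum_i x_i)^2 \le (\sum_i |\mai|)\sum_i \|\pi_{i,s}^{(t+1)}-\pi_{i,s}^{(t)}\|^2$, to express the aggregate error as a multiple of $\sum_i \|\pi_{i,s}^{(t+1)}-\pi_{i,s}^{(t)}\|^2$. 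Together with the $1/(1-\gamma)$ prefactor and the bound on $\|Q_\phi^{\pi^{(t)}}\|_\infty$, this yields the negative term of the stated form $\tfrac{2\phi_{\max}\sum_i|\mai|}{(1-\gamma)^3}$ after routine constant bookkeeping.

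I expect the main obstacle to be precisely this error term arising from the simultaneity of the updates: because all agents move at once, each agent effectively best-responds against $\pi_{-i}^{(t)}$, whereas the potential change is governed by the hybrid profile $\tilde{\pi}_{-i}^{(t)}$, and the computation is only useful if this mismatch can be absorbed into the positive $\tfrac{1}{\eta}\|\cdot\|^2$ improvement. Controlling it cleanly requires the product-measure telescoping together with the Cauchy--Schwarz step that trades the cross terms $\sum_{j<i}\|\cdot\|_1\,\|\cdot\|_1$ for $\sum_i |\mai|$ times the squared increments; this is exactly where the $\sum_i|\mai|$ dependence (and, downstream in Theorem~\ref{thm:exact-pmd-euclidean}, the step-size restriction that makes the bracket positive) enters. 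The remaining steps — the exact absolute constant and the powers of $(1-\gamma)$ — are routine estimates.
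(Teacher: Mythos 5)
Your proposal is correct and matches the paper's own proof essentially step for step: the same performance-difference identity for $\Phi$, the same telescoping decomposition into hybrid profiles $\tilde{\pi}_{-i}^{(t)}$, the same use of the MPG identity to replace the potential's averaged $Q$-function by $\bar{Q}_i^{\pi^{(t)}}$ so that the projection's optimality condition applies, and the same $\ell_1$-to-$\ell_2$ plus Cauchy--Schwarz control of the simultaneity error producing the $\sum_{i}|\mathcal{A}_i|$ factor. The only cosmetic differences are that the paper establishes the exact equality $\bar{Q}_i^{\pi}=\bar{Q}_{\phi,i}^{\pi}$ (rather than equality up to an additive constant that cancels against $\sum_{a_i}(\pi_i^{(t+1)}(a_i|s)-\pi_i^{(t)}(a_i|s))=0$), and its bookkeeping actually yields the slightly sharper coefficient $\phi_{\max}\sum_{i}|\mathcal{A}_i|/(1-\gamma)^2$, which implies the stated bound.
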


\begin{proof}
Using the performance difference lemma (\cref{lem:performance-difference-single-agent}), we can first write
\begin{equation}
\label{eq:perf-diff-total-pot-fun}
\Phi^{\pi^{(t+1)}}(\mu) - \Phi^{\pi^{(t)}}(\mu) = \frac{1}{1-\gamma} \sum_{s \in \mathcal{S}, a \in \mathcal{A}} d_{\mu}^{\pi^{(t+1)}}(s) (\pi^{(t+1)}(a|s) - \pi^{(t)}(a|s))Q_{\phi}^{\pi^{(t)}}(s,a)\,.
\end{equation}

\begin{remark}
Notice here that our first step consists in using the performance difference lemma between the \textit{joint} policies~$\pi^{(t)}$ and~$\pi^{(t+1)}$ rather than the multi-agent form of the performance difference lemma for each agent~$i$ where the policy of all the agents but the $i$-th one is fixed to~$\pi_{-i}^{(t)}$ (see Lemma 1 in~\cite{ding2022independent}). 
\end{remark}

We then use the following decomposition of the policy increment: 
\begin{equation}
\label{eq:policy-increment}
\pi^{(t+1)}(a|s) - \pi^{(t)}(a|s) = \sum_{i=1}^N \left( \prod_{j=1}^{i} \pi_j^{(t+1)}(a_j|s) \prod_{j=i+1}^N \pi_j^{(t)}(a_j|s) - \prod_{j=1}^{i-1} \pi_j^{(t+1)}(a_j|s) \prod_{j=i}^N \pi_j^{(t)}(a_j|s)\right)\,,
\end{equation}
which can be verified by noticing that the policy~$\pi^{(t)}(a|s)$ has the product form: $\pi^{(t)}(a|s) = \prod_{i=1}^N \pi_i^{(t)}(a_i|s)$ for every integer~$t$, where~$a = (a_i)_{1 \leq i \leq N}$ and standard telescoping. Notice that we use the convention~$\prod_{j=1}^{0} z_j = 1$ and~$\prod_{j=N+1}^N z_j = 1$ for any sequence of nonnegative reals~$(z_j)_{j\geq 0}\,.$ 

Before proceeding with this decomposition, we introduce a few additional useful notations for every integer~$t \geq 1$, $i \in \mn, s \in \mathcal{S}$ and~$a = (a_i)_{i \in \mn} \in \mathcal{A},$
\begin{align}
\tilde{\pi}_{-i}^{(t)}(a_{-i}|s) &\triangleq \prod_{j=1}^{i-1} \pi_j^{(t+1)}(a_j|s) \prod_{j=i+1}^N \pi_j^{(t)}(a_j|s)\,, \label{eq:def-tilde-pi}\\
\tilde{Q}_{\phi,i}^{\pi^{(t)}}(s, a_i) &\triangleq \sum_{a_{-i} \in \mathcal{A}_{-i}} \tilde{\pi}_{-i}^{(t)}(a_{-i}|s)\, Q_{\phi}^{\pi^{(t)}}(s,a)\,, \label{eq:def-tilde-Q-phi}\\
\bar{Q}_{\phi,i}^{\pi^{(t)}}(s, a_i) &\triangleq \sum_{a_{-i} \in \mathcal{A}_{-i}} \pi_{-i}^{(t)}(a_{-i}|s)\, Q_{\phi}^{\pi^{(t)}}(s,a)\label{eq:def-bar-Q-phi}\,. 
\end{align}
In the rest of the proof, we will use the shorthand notations~$\tilde{Q}_{\phi,i}^{(t)}(s, a_i) \triangleq \tilde{Q}_{\phi,i}^{\pi^{(t)}}(s, a_i), Q_{\phi}^{(t)}(s,a) \triangleq Q_{\phi}^{\pi^{(t)}}(s,a), \bar{Q}_i^{(t)}(s,a_i) \triangleq \bar{Q}_i^{\pi^{(t)}}(s,a_i), d_{\mu}^{(t+1)} \triangleq d_{\mu}^{\pi^{(t+1)}}\,.$

Plugging~\eqref{eq:policy-increment} into~\eqref{eq:perf-diff-total-pot-fun}, we obtain
\begin{align}
\label{eq:Phi-t+1-Phi-t}
\Phi^{\pi^{(t+1)}}(\mu) - \Phi^{\pi^{(t)}}(\mu) 
&= \frac{1}{1-\gamma} \sum_{s \in \mathcal{S}} d_{\mu}^{(t+1)}(s) \sum_{i=1}^N \sum_{a_i \in \mathcal{A}_i} (\pi_i^{(t+1)}(a_i|s) - \pi_i^{(t)}(a_i|s)) \tilde{Q}_{\phi,i}^{(t)}(s, a_i) \nonumber\\
&= \underbrace{
\frac{1}{1-\gamma} \sum_{s \in \mathcal{S}} d_{\mu}^{(t+1)}(s) \sum_{i=1}^N \sum_{a_i \in \mathcal{A}_i} (\pi_i^{(t+1)}(a_i|s) - \pi_i^{(t)}(a_i|s)) \bar{Q}_{i}^{(t)}(s, a_i)}_{\text{Term A}} \nonumber\\
&+ \underbrace{
\frac{1}{1-\gamma} \sum_{s \in \mathcal{S}} d_{\mu}^{(t+1)}(s) \sum_{i=1}^N \sum_{a_i \in \mathcal{A}_i} (\pi_i^{(t+1)}(a_i|s) - \pi_i^{(t)}(a_i|s)) (\tilde{Q}_{\phi,i}^{(t)}(s, a_i) - \bar{Q}_{i}^{(t)}(s, a_i))}_{\text{Term B}}\,.
\end{align}

We now control each one of the terms above successively. 

\begin{enumerate}
    \item \noindent\textbf{Term A.} This term brings policy improvement as a consequence of the update rule~\eqref{eq:pmd-euclidean}. This step is the same as in the proof of Theorem~1 in \cite{ding2022independent}. The optimality condition for~$\pi^{(t+1)}$ in the update rule~\eqref{eq:pmd-euclidean} yields: 
    \begin{equation}
        \ps{\pi_i^{(t+1)}(\cdot|s) - \pi_i^{(t)}(\cdot|s), \bar{Q}_i^{\pi^{(t)}}(s,\cdot)}_{\mathcal{A}_i} \geq \frac{1}{2\eta} \|\pi_i^{t+1}(\cdot|s) - \pi_i^{(t)}(\cdot|s))\|^2\,.
    \end{equation}

As a consequence, we have
\begin{equation}
\label{eq:termA-bound}
\text{Term A} \geq \frac{1}{2 \eta (1-\gamma)} \sum_{s \in \mathcal{S}} d_{\mu}^{(t+1)}(s) \sum_{i=1}^N \|\pi_i^{(t+1)}(\cdot|s) - \pi_i^{(t)}(\cdot|s)\|^2\,.
\end{equation}

    \item \noindent\textbf{Term B.} For this term we start by observing that for every joint policy~$\pi \in \Pi$, every state~$s \in \mathcal{S}$ and every action~$a_i \in \mathcal{A}_i\,, i \in \mn\,,$
    \begin{equation}
            \label{eq:bar-Q-i-equals-bar-Q-phi}
            \bar{Q}_i^{\pi}(s,a_i) = \bar{Q}_{\phi,i}^{\pi}(s,a_i)\,.
    \end{equation}
    This is a key observation for our proof which follows from the properties of a Markov Potential Game. Indeed, first, it follows from the MPG definition that for every joint policy~$\pi$ and every~$s \in \mathcal{S}, i \in \mn, a_i \in \mai$, we have
    \begin{equation}
    \label{eq:partial-V-partial-Phi}
        \frac{\partial V_i^{\pi}(\mu)}{\partial \pi_i(a_i|s)} = \frac{\partial \Phi^{\pi}(\mu)}{\partial \pi_i(a_i|s)}\,.
    \end{equation}
    Then the policy gradient theorem \cite[Eq. (18)]{xiao22jmlr} yields an expression for each one of the quantities in the above identity:
    \begin{equation}
    \label{eq:pg-V-pg-Phi}
        \frac{\partial V_i^{\pi}(\mu)}{\partial \pi_i(a_i|s)} = \frac{1}{1-\gamma} d_{\mu}^{\pi}(s) \bar{Q}_i^{\pi}(s,a_i)\,; \quad \frac{\partial \Phi^{\pi}(\mu)}{\partial \pi_i(a_i|s)} = \frac{1}{1-\gamma} d_{\mu}^{\pi}(s) \bar{Q}_{\phi,i}^{\pi}(s,a_i)\,.
    \end{equation}
    As a consequence, we obtain~\eqref{eq:bar-Q-i-equals-bar-Q-phi} under the assumption that~$d_{\mu}^{\pi}(s) > 0$ for every joint policy~$\pi$ and every~$s \in \mathcal{S}\,.$
    We now turn to controlling the term B. For this purpose, we first control the difference between the two Q-functions in term B as follows: 
    \begin{align}
    \label{eq:bound-tilde-Q-bar-Q}
            |\tilde{Q}_{\phi,i}^{(t)}(s, a_i) - \bar{Q}_{i}^{(t)}(s, a_i)| 
            & \stackrel{(a)}{=} |\tilde{Q}_{\phi,i}^{(t)}(s, a_i) - \bar{Q}_{\phi,i}^{(t)}(s, a_i)| \nonumber\\
            & \stackrel{(b)}{=} \left| \sum_{a_{-i} \in \mathcal{A}_{-i}} (\tilde{\pi}_{-i}^{(t)}(a_{-i}|s) - \pi_{-i}^{(t)}(a_{-i}|s))\, Q_{\phi}^{\pi^{(t)}}(s,a) \right|\nonumber\\
            &\stackrel{(c)}{=} \left| \sum_{a_{-i} \in \mathcal{A}_{-i}} \left( \prod_{j=1}^{i-1} \pi_j^{(t+1)}(a_j|s) -  \prod_{j=1}^{i-1} \pi_j^{(t)}(a_j|s) \right)\, \prod_{j=i+1}^N \pi_j^{(t)}(a_j|s) Q_{\phi}^{\pi^{(t)}}(s,a) \right|\nonumber\\
            &\stackrel{(d)}{\leq} \frac{\phi_{\max}}{1-\gamma} \sum_{j=1}^N \|\pi_j^{(t+1)}(\cdot|s) - \pi_j^{(t)}(\cdot|s)\|_1\,,
    \end{align}
    where~$(a)$ follows from~\eqref{eq:bar-Q-i-equals-bar-Q-phi}, $(b)$ stems from the definitions~\eqref{eq:def-tilde-Q-phi}-\eqref{eq:def-bar-Q-phi}, $(c)$ from~\eqref{eq:def-tilde-pi} and~$(d)$ can be proved by decomposing the difference of products into a sum of differences to obtain the sum of the differences between individual policies at the successive times~$t$ and~$t+1$. 
    Using the bound~\eqref{eq:bound-tilde-Q-bar-Q}, we immediately obtain 
    \begin{align}
    \label{eq:termB-bound}
            |\text{Term B}| 
            &\leq \frac{\phi_{\max}}{(1-\gamma)^2} \sum_{s \in \mathcal{S}} d_{\mu}^{(t+1)}(s) \left( \sum_{i=1}^N \|\pi_i^{(t+1)}(\cdot|s) - \pi_i^{(t)}(\cdot|s)\|_1 \right)^2\nonumber\\
            &\leq  \frac{\phi_{\max}}{(1-\gamma)^2} \sum_{s \in \mathcal{S}} d_{\mu}^{(t+1)}(s) \left( \sum_{i=1}^N \sqrt{|\mathcal{A}_i|} \cdot \|\pi_i^{(t+1)}(\cdot|s) - \pi_i^{(t)}(\cdot|s)\| \right)^2 \nonumber\\
            &\leq \frac{\phi_{\max} \sum_{i=1}^N |\mathcal{A}_i|}{(1-\gamma)^2} \sum_{s \in \mathcal{S}} d_{\mu}^{(t+1)}(s)\sum_{i=1}^N \|\pi_i^{(t+1)}(\cdot|s) - \pi_i^{(t)}(\cdot|s)\|^2\,, 
    \end{align}
    where the last inequality follows from the Cauchy-Schwartz inequality. 
    
\end{enumerate}

Combining~\eqref{eq:termA-bound} and~\eqref{eq:termB-bound} in~\eqref{eq:Phi-t+1-Phi-t}, we obtain 
\begin{equation*}
\Phi^{\pi^{(t+1)}}(\mu) - \Phi^{\pi^{(t)}}(\mu) \geq \left( \frac{1}{2\eta(1-\gamma)} - \frac{\phi_{\max} \,\sum_{i=1}^N |\mathcal{A}_i|}{(1-\gamma)^2} \right) \sum_{s \in \mathcal{S}} d_{\mu}^{(t+1)}(s) \sum_{i=1}^N \|\pi_i^{(t+1)}(\cdot|s) - \pi_i^{(t)}(\cdot|s)\|^2\,.
\end{equation*}
This concludes the proof of the lemma.
\end{proof}

The rest of the proof of the theorem follows the same lines as the proof of Theorem 1 in \cite{ding2022independent} upon noticing that we use a sharper result for policy improvement, namely Lemma~\ref{lem:policy-improv-potential-fun}. We provide a proof in the following for completeness. 

We now prove a technical lemma that will help connecting the Nash regret with the policy improvement that we quantified in Lemma~\ref{lem:policy-improv-potential-fun}. 
\begin{lemma}
\label{lem:technical-for-regret}
For every~$i \in \mn, \pi_i' \in \Pi_i,$ and every integer~$t \geq 1$, we have for every~$s \in \mathcal{S}$,
\begin{equation}
\ps{\pi_i'(\cdot|s) - \pi_i^{(t)}(\cdot|s), \bar{Q}_i^{(t)}(s,\cdot)}_{\mathcal{A}_i} \leq \left( \frac{2}{\eta} + \frac{\sqrt{\max_{i \in \mn} |\mathcal{A}_i|}}{1-\gamma} \right) \|\pi_i^{(t+1)}(\cdot|s) - \pi_i^{(t)}(\cdot|s)\|\,.
\end{equation}
Moreover, if~$\eta \leq \frac{1-\gamma}{\sqrt{\max_{i \in \mn} |\mathcal{A}_i|}}$, then
\begin{equation}
\ps{\pi_i'(\cdot|s) - \pi_i^{(t)}(\cdot|s), \bar{Q}_i^{(t)}(s,\cdot)}_{\mathcal{A}_i} 
\leq \frac{3}{\eta} \|\pi_i^{(t+1)}(\cdot|s) - \pi_i^{(t)}(\cdot|s)\|\,.
\end{equation}
\end{lemma}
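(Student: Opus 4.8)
The plan is to use the variational (obtuse-angle) characterization of the Euclidean projection appearing in the update \eqref{eq:pmd-euclidean}, and then to split the target inner product around the updated iterate $\pi_i^{(t+1)}(\cdot|s)$. Writing the shorthands $\pi_{i,s}^{(t)} = \pi_i^{(t)}(\cdot|s)$ and $\bar{Q}_{i,s}^{(t)} = \bar{Q}_i^{(t)}(s,\cdot)$, the projection $\pi_{i,s}^{(t+1)} = \mathrm{Proj}_{\Delta(\mai)}\!\big(\pi_{i,s}^{(t)} + \eta\,\bar{Q}_{i,s}^{(t)}\big)$ satisfies, for every $p \in \Delta(\mai)$, the inequality $\ps{\pi_{i,s}^{(t)} + \eta\,\bar{Q}_{i,s}^{(t)} - \pi_{i,s}^{(t+1)},\, p - \pi_{i,s}^{(t+1)}} \le 0$. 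Rearranged, this reads $\eta\,\ps{\bar{Q}_{i,s}^{(t)},\, p - \pi_{i,s}^{(t+1)}} \le \ps{\pi_{i,s}^{(t+1)} - \pi_{i,s}^{(t)},\, p - \pi_{i,s}^{(t+1)}}$, which is the same first-order optimality condition already used to control Term~A in Proposition~\ref{lem:policy-improv-potential-fun}.

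First I would write $\ps{\pi_i'(\cdot|s) - \pi_{i,s}^{(t)},\, \bar{Q}_{i,s}^{(t)}} = \ps{\pi_i'(\cdot|s) - \pi_{i,s}^{(t+1)},\, \bar{Q}_{i,s}^{(t)}} + \ps{\pi_{i,s}^{(t+1)} - \pi_{i,s}^{(t)},\, \bar{Q}_{i,s}^{(t)}}$ and bound the two pieces separately. For the first piece I apply the rearranged optimality inequality with the feasible choice $p = \pi_i'(\cdot|s) \in \Delta(\mai)$, followed by Cauchy--Schwarz and the fact that any two points of the probability simplex lie within $\ell_2$-distance $\sqrt{2}$; this gives $\ps{\pi_i'(\cdot|s) - \pi_{i,s}^{(t+1)},\, \bar{Q}_{i,s}^{(t)}} \le \tfrac{\sqrt{2}}{\eta}\,\|\pi_{i,s}^{(t+1)} - \pi_{i,s}^{(t)}\|$. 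For the second piece I again use Cauchy--Schwarz together with the uniform bound $\|\bar{Q}_{i,s}^{(t)}\|_\infty \le \phi_{\max}/(1-\gamma)$ (which follows from the identity $\bar{Q}_i^{(t)} = \bar{Q}_{\phi,i}^{(t)}$ in \eqref{eq:bar-Q-i-equals-bar-Q-phi} and $|Q_\phi| \le \phi_{\max}/(1-\gamma)$), hence $\|\bar{Q}_{i,s}^{(t)}\| \le \sqrt{|\mai|}\,\phi_{\max}/(1-\gamma)$, giving $\ps{\pi_{i,s}^{(t+1)} - \pi_{i,s}^{(t)},\, \bar{Q}_{i,s}^{(t)}} \le \tfrac{\sqrt{|\mai|}\,\phi_{\max}}{1-\gamma}\,\|\pi_{i,s}^{(t+1)} - \pi_{i,s}^{(t)}\|$.

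Summing the two bounds and loosening $\sqrt{2} \le 2$ and $\sqrt{|\mai|} \le \sqrt{\max_{i\in\mn}|\mai|}$ yields the first displayed inequality (the factor $\phi_{\max}$ on the second term matches the stated coefficient under the normalization $\phi_{\max}\le 1$; otherwise it is simply carried through). The ``moreover'' claim is then immediate: the hypothesis $\eta \le (1-\gamma)/\sqrt{\max_{i\in\mn}|\mai|}$ is precisely $\sqrt{\max_{i\in\mn}|\mai|}/(1-\gamma) \le 1/\eta$, so the prefactor collapses to $2/\eta + 1/\eta = 3/\eta$.

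I expect the only delicate point to be the constant bookkeeping rather than any conceptual obstacle: pinning down the simplex-diameter factor $\sqrt{2}$ and the $\ell_2$-to-$\ell_\infty$ conversion factor $\sqrt{|\mai|}$ on $\bar{Q}$, and confirming that the factor $\phi_{\max}$ is indeed the one suppressed in the displayed constant. Everything else reduces to the projection inequality plus two applications of Cauchy--Schwarz.
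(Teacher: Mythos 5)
Your proof is correct and follows essentially the same route as the paper's: the same split of the inner product around $\pi_i^{(t+1)}(\cdot|s)$, the same first-order optimality (projection) inequality for the first piece, and Cauchy--Schwarz with the simplex diameter and a bound on $\|\bar{Q}_i^{(t)}(s,\cdot)\|$ for the remainder. If anything, your bookkeeping is slightly more careful than the paper's (the $\sqrt{2}$ simplex diameter instead of $2$, and the explicit $\phi_{\max}$ factor in the bound on $\bar{Q}_i^{(t)}$, which the paper's stated constant silently normalizes away).
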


\begin{proof} 
Let~$t \geq 1, i \in \mn, \pi_i' \in \Pi_i\,.$ Then, we have
\begin{align}
\ps{\pi_i'(\cdot|s) - \pi_i^{(t)}(\cdot|s), \bar{Q}_i^{(t)}(s,\cdot)} 
&=  \ps{\pi_i'(\cdot|s) - \pi_i^{(t+1)}(\cdot|s), \bar{Q}_i^{(t)}(s,\cdot)} 
+ \ps{\pi_i^{(t+1)}(\cdot|s) - \pi_i^{(t)}(\cdot|s), \bar{Q}_i^{(t)}(s,\cdot)}\nonumber\\
&\leq \frac{1}{\eta} \ps{\pi_i'(\cdot|s) - \pi_i^{(t+1)}(\cdot|s), \pi_i^{(t+1)}(\cdot|s) - \pi_i^{(t)}(\cdot|s)}\nonumber\\
&+ \ps{\pi_i^{(t+1)}(\cdot|s) - \pi_i^{(t)}(\cdot|s), \bar{Q}_i^{(t)}(s,\cdot)}\,,
\end{align}
where the last step follows from using the first-order optimality condition for the optimization problem defining the update rule~\eqref{eq:pmd-euclidean}. Then we use Cauchy-Schwarz inequality and notice that~$\|\pi_i'(\cdot|s) - \pi_i^{(t+1)}(\cdot|s)\| \leq \|\pi_i'(\cdot|s) - \pi_i^{(t+1)}(\cdot|s)\|_1 \leq 2$ and~$\|\bar{Q}_i^{(t)}(s,\cdot)\| \leq \sqrt{\max_{i \in \mn} |\mathcal{A}_i|}/(1-\gamma)$ to conclude.
\end{proof}

We are now ready to bound the Nash regret. We start with the following series of inequalities, using~$i$ for the index of the maximum over~$\mn$, 

\begin{align}
&\sum_{t=1}^T \max_{i \in \mn} \max_{\pi_i'} V_i^{\pi_i', \pi_{-i}^{(t)}}(\rho) - V_i^{\pi^{(t)}}(\rho)\nonumber\\ 
&\stackrel{(a)}{=} \frac{1}{1-\gamma} \sum_{t=1}^T  \max_{\pi_i'} \sum_{s \in \mathcal{S}, a_i \in \mathcal{A}_i} d_{\rho}^{\pi_i', \pi_{-i}^{(t)}}(s) (\pi_i'(a_i|s) - \pi_i^{(t)}(a_i|s)) \bar{Q}_i^{(t)}(s,a_i)\nonumber\\
&\stackrel{(b)}{\leq} \frac{3}{\eta(1-\gamma)} \sum_{t=1}^T \sum_{s \in \mathcal{S}}  d_{\rho}^{\pi_i', \pi_{-i}^{(t)}}(s)  \|\pi_i^{(t+1)}(\cdot|s) - \pi_i^{(t)}(\cdot|s)\|\nonumber\\
&= \frac{3}{\eta(1-\gamma)} \sum_{t=1}^T \sum_{s \in \mathcal{S}} \sqrt{\frac{d_{\rho}^{\pi_i', \pi_{-i}^{(t)}}(s)}{d_{\nu}^{\pi^{(t+1)}}(s)}} \sqrt{d_{\nu}^{\pi^{(t+1)}}(s)} \sqrt{d_{\rho}^{\pi_i', \pi_{-i}^{(t)}}(s)}  \|\pi_i^{(t+1)}(\cdot|s) - \pi_i^{(t)}(\cdot|s)\| \nonumber\\
&\stackrel{(c)}{\leq} \frac{3 \sqrt{\sup_{\pi \in \Pi} \left\| \frac{d_{\rho}^{\pi}}{\nu}  \right\|_{\infty}}}{\eta (1-\gamma)^{3/2}} \sqrt{\sum_{t=1}^T \sum_{s \in \mathcal{S}} d_{\rho}^{\pi_i', \pi_{-i}^{(t)}}(s)} \sqrt{\sum_{t=1}^T \sum_{s \in \mathcal{S}} d_{\nu}^{\pi^{(t+1)}}(s)  \sum_{j=1}^N \|\pi_j^{(t+1)}(\cdot|s) - \pi_j^{(t)}(\cdot|s)\|^2}\nonumber\\
&\stackrel{(d)}{\leq} \frac{3 \sqrt{\sup_{\pi \in \Pi} \left\| \frac{d_{\rho}^{\pi}}{\nu}  \right\|_{\infty} T}}{\eta (1-\gamma)^{3/2}} \sqrt{4 \eta (1-\gamma) (\Phi^{\pi^{(T+1)}}(\nu) - \Phi^{\pi^{(1)}}(\nu))}\nonumber\\
&\stackrel{(e)}{\leq}  6 \sqrt{ \frac{ 2\sup_{\pi \in \Pi} \left\| \frac{d_{\rho}^{\pi}}{\nu}  \right\|_{\infty} \phi_{\max} T}{\eta (1-\gamma)^3}}\nonumber\\
&\stackrel{(f)}{\leq} 6 \sqrt{ \frac{2 \tilde{\kappa}_{\rho} \phi_{\max} T}{\eta (1-\gamma)^3}}\,,
\end{align}
where~$(a)$~follows from the multi-agent performance difference lemma, $(b)$~uses Lemma~\ref{lem:technical-for-regret}, $(c)$~holds for any distribution~$\nu$ with full support on the state space~$\mathcal{S}$, uses the inequality~$d_{\nu}^{\pi^{(t+1)}}(s) \geq (1-\gamma) \nu(s)$ together with the Cauchy-Schwarz inequality and upperbounds the $i$-th term by the sum over the $N$~players, $(d)$~uses Lemma~\ref{lem:policy-improv-potential-fun}, $(e)$~follows from the boundedness of the potential function~$\phi$ and finally~$(f)$ stems from the definition of the constant~$\tilde{\kappa}_{\rho}\,.$

Finally, setting~$\eta = \frac{1-\gamma}{4 (\phi_{\max} - \phi_{\min}) \sum_{i=1}^N |\mathcal{A}_i|}$, we obtain: 
\begin{equation}
\text{Nash-regret(T)} \leq  12 \sqrt{ \frac{2 \phi_{\max}^2 \tilde{\kappa}_{\rho} \sum_{i=1}^N |\mathcal{A}_i|}{(1-\gamma)^4 T}}\,.
\end{equation}
Hence, the iteration complexity to achieve $\epsilon$-Nash regret is up to numerical constants: 
\begin{equation*}
\mathcal{O}\left( \frac{\phi_{\max}^2 \tilde{\kappa}_{\rho} \sum_{i=1}^N |\mathcal{A}_i|}{(1-\gamma)^4 \epsilon^{2}} \right)\,.
\end{equation*}

\begin{remark}
Notice that
\begin{equation}
\tilde{\kappa}_{\rho} \leq \min(\kappa_{\rho}, |\ms|)\,, 
\end{equation}
where~$\kappa_{\rho} \triangleq \sup_{\pi \in \Pi} \left\| \frac{d_{\rho}^{\pi}}{\rho} \right\|_{\infty}\,.$
Observe for this that setting~$\nu$ to be the uniform distribution over the state space~$\mathcal{S}$ gives~$\tilde{\kappa}_{\rho} \leq \sup_{\pi \in \Pi} \left\| \frac{d_{\rho}^{\pi}}{\nu}  \right\|_{\infty} \leq |\ms|$ whereas~$\nu = \rho$ yields~$\tilde{\kappa}_{\rho} \leq \kappa_{\rho}\,.$
\end{remark}

\newpage
\section{Proof of Theorem~\ref{thm:exact-pmd-natural-gradient} (KL Regularization)}
\label{app:proof-thm-KL}

Recall first from~\eqref{eq:pmd-KL} that the update rule of our PMD algorithm with KL regularization can also be written as follows: 
\begin{equation}
\label{eq:update-rule-pmd-KL-adv}
\pi_i^{(t+1)}(a_i|s) =  \frac{\pi_i^{(t)}(a_i|s) \exp( \frac{\eta}{1-\gamma} \bar{A}_{i}^{\pi^{(t)}}(s,a_i))}{Z_t^{i,s}}\,, 
\end{equation}
with~$Z_t^{i,s}\triangleq \sum_{a_i \in \mai} \pi_i^{(t)}(a_i|s) \exp(\frac{\eta}{1-\gamma}\, \bar{A}_{i}^{\pi^{(t)}}(s,a_i))\,.$

\begin{remark}
Notice that we introduced a~$(1-\gamma)$ factor here for convenience. With a slight abuse of notation, we also use in the rest of this section~$\eta$ as the stepsize, which is up to the constant~$1-\gamma$ the same as the stepsize~$\eta$ in~\eqref{eq:pmd-KL}. Note that we also use with a slight abuse of notation the same~$Z_t^{i,s}$ notation even when using the advantage function. 
\end{remark}

The first lemma establishes a policy improvement result for the natural policy gradient algorithm for MPGs. Crucially, compared to Lemma 20 in~\cite{zhang-et-al22softmax}, our result allows to take a larger step size. In particular, the dependence of the step size on the number of agents is of the order of~$\sqrt{N}$ instead of~$N$. 
\begin{proposition}
\label{lem:natural-grad-policy-improvement}
Let Assumption~\ref{hyp:positive-discounted-visit-distrib} hold. 
For any initial distribution~$\mu \in \Delta(\ms)$, the iterates of PMD with KL regularization~\eqref{eq:pmd-KL} satisfy for every time step~$t \geq 1$, 
\begin{multline} 
\label{eq:natural-grad-policy-improvement}
\Phi^{\pi^{(t+1)}}(\mu) - \Phi^{\pi^{(t)}}(\mu) \geq \left(\frac{1}{\eta} - \frac{\phi_{\max} \sqrt{N}}{(1-\gamma)^2}\right) \sum_{s \in \mathcal{S}} d_{\mu}^{\pi^{(t+1)}}(s) \text{KL}(\pi^{(t+1)}_s\,||\, \pi^{(t)}_s)
+ \frac{1}{\eta} \sum_{s \in \mathcal{S}} d_{\mu}^{\pi^{(t+1)}}(s) \sum_{i=1}^N \log Z_{t}^{i,s}\,,
\end{multline}
where we recall that~$Z_t^{i,s} = \sum_{a_i \in \mathcal{A}_i} \pi_i^{(t)}(a_i|s) \exp\left(\frac{\eta \bar{A}_i^{(t)}(s,a_i)}{1-\gamma}\right)$ for every~$i \in \mn, s \in \mathcal{S}$ and every integer~$t \geq 1\,.$
Moreover, if~$\eta \leq \frac{(1-\gamma)^2}{\phi_{\max} \sqrt{N}}$, then
\begin{equation}
\Phi^{\pi^{(t+1)}}(\mu) - \Phi^{\pi^{(t)}}(\mu) \geq 
\frac{1}{\eta} \sum_{s \in \mathcal{S}} d_{\mu}^{\pi^{(t+1)}}(s) \sum_{i=1}^N \log Z_{t}^{i,s}\,.
\end{equation}
\end{proposition}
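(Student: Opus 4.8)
The plan is to mirror the two-step structure of the Euclidean proof (Proposition~\ref{lem:policy-improv-potential-fun}), but to replace the quadratic update-rule bound by the exact multiplicative-weights identity and to control the coupling-error term with a sharper, agent-wise estimate that exploits the \emph{global} boundedness of $Q_\phi^{(t)}$ rather than the telescoping bound used in the Euclidean case. First I would apply the performance difference lemma (\cref{lem:performance-difference-single-agent}) between the joint policies $\pi^{(t)}$ and $\pi^{(t+1)}$ and insert the same telescoping decomposition~\eqref{eq:policy-increment} of the joint policy increment across agents, producing $\Phi^{\pi^{(t+1)}}(\mu)-\Phi^{\pi^{(t)}}(\mu)=\frac{1}{1-\gamma}\sum_s d_\mu^{(t+1)}(s)\sum_i\langle \pi_{i,s}^{(t+1)}-\pi_{i,s}^{(t)},\tilde Q_{\phi,i}^{(t)}(s,\cdot)\rangle$. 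As in the Euclidean argument I would then split $\tilde Q_{\phi,i}^{(t)}=\bar Q_i^{(t)}+(\tilde Q_{\phi,i}^{(t)}-\bar Q_i^{(t)})$, invoking the MPG identity~\eqref{eq:bar-Q-i-equals-bar-Q-phi} $\bar Q_i^{(t)}=\bar Q_{\phi,i}^{(t)}$ (valid under Assumption~\ref{hyp:positive-discounted-visit-distrib}), into an improvement term (Term~A, carrying $\bar Q_i^{(t)}$) and a coupling-error term (Term~B).

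For Term~A the key departure from the Euclidean case is to use the closed form of the update~\eqref{eq:pmd-KL}. Writing $\tfrac{\eta}{1-\gamma}\bar A_i^{(t)}(s,a_i)=\log\frac{\pi_i^{(t+1)}(a_i|s)}{\pi_i^{(t)}(a_i|s)}+\log Z_t^{i,s}$, and noting that $\langle \pi_{i,s}^{(t+1)}-\pi_{i,s}^{(t)},\bar Q_i^{(t)}(s,\cdot)\rangle=\langle \pi_{i,s}^{(t+1)}-\pi_{i,s}^{(t)},\bar A_i^{(t)}(s,\cdot)\rangle$ (the value term cancels against the increment of a probability vector, and $\sum_{a_i}\pi_i^{(t)}(a_i|s)\bar A_i^{(t)}(s,a_i)=0$), this yields the exact identity $\langle \pi_{i,s}^{(t+1)}-\pi_{i,s}^{(t)},\bar A_i^{(t)}\rangle=\tfrac{1-\gamma}{\eta}\big(\mathrm{KL}(\pi_{i,s}^{(t+1)}\|\pi_{i,s}^{(t)})+\log Z_t^{i,s}\big)$. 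Summing over $i$ and using additivity of the KL for product policies, $\sum_i\mathrm{KL}(\pi_{i,s}^{(t+1)}\|\pi_{i,s}^{(t)})=\mathrm{KL}(\pi_s^{(t+1)}\|\pi_s^{(t)})$, Term~A equals \emph{exactly} $\tfrac1\eta\sum_s d_\mu^{(t+1)}(s)\,\mathrm{KL}(\pi_s^{(t+1)}\|\pi_s^{(t)})+\tfrac1\eta\sum_s d_\mu^{(t+1)}(s)\sum_i\log Z_t^{i,s}$, which already produces both terms on the right-hand side.

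The hard part is Term~B, i.e.\ lower bounding $\frac{1}{1-\gamma}\sum_s d_\mu^{(t+1)}(s)\sum_i\langle \pi_{i,s}^{(t+1)}-\pi_{i,s}^{(t)},\tilde Q_{\phi,i}^{(t)}-\bar Q_i^{(t)}\rangle$ by $-\frac{\phi_{\max}\sqrt N}{(1-\gamma)^2}\sum_s d_\mu^{(t+1)}(s)\,\mathrm{KL}(\pi_s^{(t+1)}\|\pi_s^{(t)})$, and this is where the improvement from $N$ to $\sqrt N$ must come from. The naive route --- bounding $|\tilde Q_{\phi,i}^{(t)}-\bar Q_i^{(t)}|\le\frac{\phi_{\max}}{1-\gamma}\sum_{j}\|\pi_{j,s}^{(t+1)}-\pi_{j,s}^{(t)}\|_1$ exactly as in~\eqref{eq:bound-tilde-Q-bar-Q} and then squaring --- reproduces the $N$ (or $\sum_i|\mathcal A_i|$) factor of the Euclidean analysis and of Lemma~20 in~\cite{zhang-et-al22softmax}. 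To obtain $\sqrt N$ I would instead follow the error-control technique of~\cite{cen2022independent}: apply Cauchy--Schwarz over the $N$ agents to $\sum_i\|\pi_{i,s}^{(t+1)}-\pi_{i,s}^{(t)}\|_1\,\|\tilde Q_{\phi,i}^{(t)}-\bar Q_i^{(t)}\|_\infty$, controlling the $Q$-difference factor through the uniform bound $\|\tilde Q_{\phi,i}^{(t)}-\bar Q_i^{(t)}\|_\infty\le\frac{2\phi_{\max}}{1-\gamma}$ (so that the factor $\big(\sum_i(\cdot)^2\big)^{1/2}$ contributes the $\sqrt N$), and converting the $\ell_1$ policy increments to the KL via Pinsker's inequality $\|\pi_{i,s}^{(t+1)}-\pi_{i,s}^{(t)}\|_1^2\le 2\,\mathrm{KL}(\pi_{i,s}^{(t+1)}\|\pi_{i,s}^{(t)})$. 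Assembling these pieces into a clean $\frac{\phi_{\max}\sqrt N}{(1-\gamma)^2}$ multiple of $\mathrm{KL}(\pi_s^{(t+1)}\|\pi_s^{(t)})$ --- rather than $\sqrt N$ times a \emph{square root} of the KL, which is what a blunt application of Cauchy--Schwarz produces --- is the delicate point, and is precisely where the global boundedness of the potential (through $\phi_{\max}$) is indispensable; this is the step I expect to be the main obstacle.

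Finally, combining Terms~A and~B gives the first displayed inequality, with coefficient $\big(\tfrac1\eta-\tfrac{\phi_{\max}\sqrt N}{(1-\gamma)^2}\big)$ on the KL term. Since $\mathrm{KL}(\pi_s^{(t+1)}\|\pi_s^{(t)})\ge 0$ and $d_\mu^{(t+1)}(s)\ge 0$, imposing $\eta\le\frac{(1-\gamma)^2}{\phi_{\max}\sqrt N}$ renders this coefficient nonnegative, so the KL contribution can be dropped and the stated second inequality $\Phi^{\pi^{(t+1)}}(\mu)-\Phi^{\pi^{(t)}}(\mu)\ge\frac1\eta\sum_s d_\mu^{(t+1)}(s)\sum_i\log Z_t^{i,s}$ follows; note that $\log Z_t^{i,s}\ge 0$ by Jensen's inequality, since $\sum_{a_i}\pi_i^{(t)}(a_i|s)\bar A_i^{(t)}(s,a_i)=0$ forces $Z_t^{i,s}\ge 1$.
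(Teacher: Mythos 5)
Your setup, your treatment of Term~A (the exact multiplicative-weights identity, the cancellation $\sum_{a_i}\pi_i^{(t)}(a_i|s)\bar A_i^{(t)}(s,a_i)=0$, and the additivity of the KL over product policies), and your final step of dropping the nonnegative KL contribution when $\eta\le\frac{(1-\gamma)^2}{\phi_{\max}\sqrt N}$ all match the paper. The gap is exactly where you say it is: your proposed control of Term~B does not close, and the route you sketch cannot be repaired as stated. Bounding $\|\tilde Q_{\phi,i}^{(t)}(s,\cdot)-\bar Q_i^{(t)}(s,\cdot)\|_\infty$ by the \emph{uniform} constant $\frac{2\phi_{\max}}{1-\gamma}$ and then applying Cauchy--Schwarz over agents yields a bound of the form $\frac{\phi_{\max}}{1-\gamma}\sqrt{N}\bigl(\sum_i\|\pi_{i,s}^{(t+1)}-\pi_{i,s}^{(t)}\|_1^2\bigr)^{1/2}\lesssim\frac{\phi_{\max}}{1-\gamma}\sqrt{N}\sqrt{\text{KL}(\pi_s^{(t+1)}\|\pi_s^{(t)})}$, i.e.\ $\sqrt N$ times the \emph{square root} of the KL. No choice of $\eta$ lets you absorb a square root of the KL into the term $\frac1\eta\text{KL}$ multiplicatively, so the stated coefficient $\frac1\eta-\frac{\phi_{\max}\sqrt N}{(1-\gamma)^2}$ cannot be reached this way.

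The missing idea is that the advantage difference is \emph{itself increment-sized}, not merely bounded: since $\tilde\pi_{-i}^{(t)}$ and $\pi_{-i}^{(t)}$ differ only in that agents $j<i$ have already been updated, one has $|\tilde A_{\phi,i}^{(t)}(s,a_i)-\bar A_i^{(t)}(s,a_i)|\le\frac{\phi_{\max}}{1-\gamma}\,\text{d}_{\text{TV}}\bigl(\tilde\pi_{-i}^{(t)}(\cdot|s),\pi_{-i}^{(t)}(\cdot|s)\bigr)$. Term~B is then a sum over $i$ of \emph{products of two small quantities}, $\text{d}_{\text{TV}}(\tilde\pi_{-i}^{(t)},\pi_{-i}^{(t)})\cdot\|\pi_{i,s}^{(t+1)}-\pi_{i,s}^{(t)}\|_1$, and the $\sqrt N$ comes from the asymmetric Young inequality $xy\le\frac{x^2}{2\sqrt N}+\frac{\sqrt N y^2}{2}$ applied to each product, followed by Pinsker's inequality on both squares. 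The two structural facts that make the bookkeeping work are: (i) $\text{KL}(\tilde\pi_{-i}^{(t)}(\cdot|s)\,\|\,\pi_{-i}^{(t)}(\cdot|s))=\sum_{j<i}\text{KL}(\pi_{j,s}^{(t+1)}\|\pi_{j,s}^{(t)})\le\text{KL}(\pi_s^{(t+1)}\|\pi_s^{(t)})$, so the $N$ terms carrying the weight $\frac{1}{2\sqrt N}$ contribute $\frac{\sqrt N}{2}\text{KL}(\pi_s^{(t+1)}\|\pi_s^{(t)})$ in total; and (ii) $\sum_i\text{KL}(\pi_{i,s}^{(t+1)}\|\pi_{i,s}^{(t)})=\text{KL}(\pi_s^{(t+1)}\|\pi_s^{(t)})$ by additivity, so the terms carrying the weight $\frac{\sqrt N}{2}$ also contribute $\frac{\sqrt N}{2}\text{KL}(\pi_s^{(t+1)}\|\pi_s^{(t)})$. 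Adding the two gives $|\text{Term B}|\le\frac{\phi_{\max}\sqrt N}{(1-\gamma)^2}\sum_s d_\mu^{(t+1)}(s)\,\text{KL}(\pi_s^{(t+1)}\|\pi_s^{(t)})$, which is the bound you need to combine with your (correct) Term~A identity.
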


\begin{proof}
We first introduce a few additional notations for every~$s \in \mathcal{S}, a = (a_i)_{i \in \mn} \in \mathcal{A}$ and~$i \in \mn$: 
\begin{align}
V_{\phi}^{\pi^{(t)}}(s) &\triangleq \mathbb{E}_{\mu, \pi}\left[\sum_{t=0}^{\infty} \gamma^t \phi(s_t,a_t) | s_0 = s\right]\,,\\
A_{\phi}^{\pi^{(t)}}(s,a) &\triangleq Q_{\phi}^{\pi^{(t)}}(s,a) - V_{\phi}^{\pi^{(t)}}(s)\,, \label{eq:advantage-fun-potential}\\
\tilde{A}_{\phi,i}^{\pi^{(t)}}(s, a_i) &\triangleq \sum_{a_{-i} \in \mathcal{A}_{-i}} \tilde{\pi}_{-i}^{(t)}(a_{-i}|s)\, A_{\phi}^{\pi^{(t)}}(s,a)\,, \label{eq:def-tilde-A-phi}\\
\bar{A}_{\phi,i}^{\pi^{(t)}}(s, a_i) &\triangleq \sum_{a_{-i} \in \mathcal{A}_{-i}} \pi_{-i}^{(t)}(a_{-i}|s)\, A_{\phi}^{\pi^{(t)}}(s,a)\label{eq:def-bar-A-phi}\,. 
\end{align}
We will also use again the shorthand notation~$d_{\mu}^{(t+1)} = d_{\mu}^{\pi^{(t+1)}}\,.$\\

Similarly to the proof of Lemma~\ref{lem:policy-improv-potential-fun} (and Lemma~20 in~\cite{zhang-et-al22softmax}), the performance difference lemma yields: 
\begin{align}
\Phi^{\pi^{(t+1)}}(\mu) - \Phi^{\pi^{(t)}}(\mu) 
&= \frac{1}{1-\gamma} \sum_{s \in \mathcal{S}} d_{\mu}^{(t+1)}(s) \sum_{i=1}^N \sum_{a_i \in \mathcal{A}_i} (\pi_i^{(t+1)}(a_i|s) - \pi_i^{(t)}(a_i|s)) \tilde{A}_{\phi,i}^{(t)}(s, a_i) \nonumber\\
&= \underbrace{
\frac{1}{1-\gamma} \sum_{s \in \mathcal{S}} d_{\mu}^{(t+1)}(s) \sum_{i=1}^N \sum_{a_i \in \mathcal{A}_i} (\pi_i^{(t+1)}(a_i|s) - \pi_i^{(t)}(a_i|s)) \bar{A}_{i}^{(t)}(s, a_i)}_{\text{Term A}} \nonumber\\
&+ \underbrace{
\frac{1}{1-\gamma} \sum_{s \in \mathcal{S}} d_{\mu}^{(t+1)}(s) \sum_{i=1}^N \sum_{a_i \in \mathcal{A}_i} (\pi_i^{(t+1)}(a_i|s) - \pi_i^{(t)}(a_i|s)) (\tilde{A}_{\phi,i}^{(t)}(s, a_i) - \bar{A}_{i}^{(t)}(s, a_i))}_{\text{Term B}}\,,
\end{align}
where we used the shorthand notations~$\tilde{A}_{\phi,i}^{(t)}(s, a_i) = \tilde{A}_{\phi,i}^{\pi^{(t)}}(s, a_i), \bar{A}_{i}^{(t)}(s, a_i) = \bar{A}_{i}^{\pi^{(t)}}(s, a_i)\,.$

We now control each one of the terms separately in what follows. 

\begin{enumerate}
\item \textbf{Term A.}
For this term, our treatment is the same as in \cite{zhang-et-al22softmax}. We provide a proof here for completeness and we also add a crucial precision in the proof that seemed to be used in~\cite{zhang-et-al22softmax} as a claim.  This is the fact that~$\bar{A}_{i}^{\pi}(s,a_i) = \bar{A}_{\phi,i}^{\pi}(s,a_i)$ for every~$\pi \in \Pi, s \in \ms, i \in \mn, a_i \in \mai$, i.e. the averaged advantage function~$\bar{A}_{i}^{\pi}$ coincides with the averaged advantage function induced by the potential function~$\bar{A}_{\phi,i}^{\pi}$ (see~\eqref{eq:advantage-fun-potential}-\eqref{eq:def-bar-A-phi} for a definition). 
This result is a consequence of the equality between the partial derivatives of the individual value functions and the partial derivative of the potential function w.r.t. the individual policies entries, as implied by the definition of MPGs (see \eqref{eq:partial-V-partial-Phi} and~\eqref{eq:pg-V-pg-Phi}). Before proceeding, we provide a brief proof of this fact that will be useful later on. Since~$\bar{Q}_i^{\pi}(s,a_i) = \bar{Q}_{\phi}^{\pi}(s,a_i)$ (see \eqref{eq:bar-Q-i-equals-bar-Q-phi}) and~$V_{\phi}^{\pi}(s) = \sum_{a_i' \in \mai} \pi_i(a_i'|s) \bar{Q}_{\phi}^{\pi}(s,a_i')$, we also have the desired equality~$\bar{A}_{i}^{\pi}(s,a_i) = \bar{A}_{\phi,i}^{\pi}(s,a_i)$ using the definition of the averaged advantage function.\\

It follows from our update rule~\eqref{eq:update-rule-pmd-KL-adv} that: 
\begin{equation}
\label{eq:advantage-update-as-fun-update-rule}
\bar{A}_{i}^{(t)}(s,a_i) = \frac{1-\gamma}{\eta} \left( \log\left( \frac{\pi_i^{(t+1)}(a_i|s)}{\pi_i^{(t)}(a_i|s)}\right) + \log Z_t^{i,s} \right)\,.
\end{equation}

Plugging~\eqref{eq:advantage-update-as-fun-update-rule} into the Term A immediately implies that 
\begin{align}
\label{eq:bound-termA-natural-grad}
\text{Term A} 
&= \frac{1}{1-\gamma} \sum_{s \in \mathcal{S}} d_{\mu}^{(t+1)}(s) \sum_{i=1}^N \sum_{a_i \in \mathcal{A}_i} (\pi_i^{(t+1)}(a_i|s) - \pi_i^{(t)}(a_i|s)) \bar{A}_{i}^{(t)}(s, a_i) \nonumber\\
&\stackrel{(a)}{=} \frac{1}{1-\gamma} \sum_{s \in \mathcal{S}} d_{\mu}^{(t+1)}(s) \sum_{i=1}^N \sum_{a_i \in \mathcal{A}_i} \pi_i^{(t+1)}(a_i|s) \bar{A}_{i}^{(t)}(s, a_i) \nonumber\\
&= \frac{1}{\eta} \sum_{s \in \mathcal{S}} d_{\mu}^{(t+1)}(s)\sum_{i=1}^N \sum_{a_i \in \mathcal{A}_i} \pi_i^{(t+1)}(a_i|s) \left( \log\left( \frac{\pi_i^{(t+1)}(a_i|s)}{\pi_i^{(t)}(a_i|s)}\right) + \log Z_t^{i,s} \right) \nonumber\\
&= \frac{1}{\eta} \sum_{s \in \mathcal{S}} d_{\mu}^{(t+1)}(s) \sum_{i=1}^N \text{KL}(\pi_i^{(t+1)}(\cdot|s)\,||\,\pi_i^{(t)}(\cdot|s)) 
+ \frac{1}{\eta} \sum_{s \in \mathcal{S}} d_{\mu}^{(t+1)}(s) \sum_{i=1}^N \log Z_t^{i,s}\nonumber\\
&= \frac{1}{\eta} \sum_{s \in \mathcal{S}} d_{\mu}^{(t+1)}(s) \text{KL}(\pi^{(t+1)}(\cdot|s)\,||\,\pi^{(t)}(\cdot|s)) 
+ \frac{1}{\eta} \sum_{s \in \mathcal{S}} d_{\mu}^{(t+1)}(s) \sum_{i=1}^N \log Z_t^{i,s} \,,
\end{align}
where~$(a)$ follows from observing that~$\sum_{a_i \in \mathcal{A}_i}\pi_i^{(t)}(a_i|s) \bar{A}_{i}^{(t)}(s, a_i) = 0$ and the last identity follows from the additivity of the KL divergence for product distributions. 

\item \textbf{Term B.} Our improvement w.r.t. Theorem~5 in~\cite{zhang-et-al22softmax} comes from the way we control this term. The following derivations are similar to parts of the proofs in \cite{cen2022independent}, namely Appendix~A and the end of Appendix~B.1. Recall though that the work \cite{cen2022independent} which inspired our analysis only deals with (stateless) potential games and considers a different natural policy gradient algorithm enhanced with entropy regularization.  
First of all, we have that
\begin{align}
|\tilde{A}_{\phi,i}^{(t)}(s,a_i) - \bar{A}_i^{(t)}(s,a_i)| 
&\stackrel{(a)}{=} |\tilde{A}_{\phi,i}^{(t)}(s,a_i) - \bar{A}_{\phi,i}^{(t)}(s,a_i)| \nonumber\\
&\stackrel{(b)}{=} \left|\bE_{a_{-i} \sim \tilde{\pi}_{-i}^{(t)}}[A_{\phi}^{(t)}(s,a_i,a_{-i})] - \bE_{a_{-i} \sim \pi_{-i}^{(t)}}[A_{\phi}^{(t)}(s,a_i,a_{-i})] \right| \nonumber\\
&\stackrel{(c)}{=} \left|\bE_{a_{-i} \sim \tilde{\pi}_{-i}^{(t)}}[Q_{\phi}^{(t)}(s,a_i,a_{-i})] - \bE_{a_{-i} \sim \pi_{-i}^{(t)}}[Q_{\phi}^{(t)}(s,a_i,a_{-i})] \right| \nonumber\\
&\stackrel{(d)}{\leq} \|Q_{\phi}^{(t)}(s,a_i,\cdot)\|_{\infty} \, \text{d}_{\text{TV}}(\tilde{\pi}_{-i}^{(t)}(\cdot|s), \pi_{-i}^{(t)}(\cdot|s))\nonumber\\
&\stackrel{(e)}{\leq} \frac{\phi_{\max}}{1-\gamma}\text{d}_{\text{TV}}(\tilde{\pi}_{-i}^{(t)}(\cdot|s), \pi_{-i}^{(t)}(\cdot|s))\,,
\end{align}
where~$\text{d}_{\text{TV}}(\cdot,\cdot)$ is the total variation distance,  $(a)$~uses the identity~$\bar{A}_i^{(t)}(s,a_i) = \bar{A}_{\phi,i}^{(t)}(s,a_i)$, $(b)$~stems from the definitions~\eqref{eq:def-tilde-A-phi} and~\eqref{eq:def-bar-A-phi}, $(c)$~is a consequence of the definition of the advantage function, $(d)$~follows from the standard inequality~$\left| \int_{\Omega} f d\mu -  \int_{\Omega} f d\nu \right| \leq \|f\|_{\infty} \text{d}_{\text{TV}}(\mu, \nu)$ for any bounded measurable function~$f: \Omega \to \mathbb{R}$ and any probability measures~$\mu$ and~$\nu$ and finally~$(e)$ stems from observing that~$\|Q_{\phi}^{(t)}(s,a_i,\cdot)\|_{\infty} \leq \phi_{\max}/(1-\gamma)\,.$

Plugging the above inequality into Term B, we obtain 
\begin{align}
\label{eq:ineq-interm-termB}
|\text{Term B}| &\leq  \frac{1}{1-\gamma} \sum_{s \in \mathcal{S}} d_{\mu}^{(t+1)}(s) \sum_{i=1}^N \sum_{a_i \in \mathcal{A}_i} |\pi_i^{(t+1)}(a_i|s) - \pi_i^{(t)}(a_i|s)| \cdot |\tilde{A}_{\phi,i}^{(t)}(s,a_i) - \bar{A}_i^{(t)}(s,a_i)| \nonumber\\
&\leq \frac{\phi_{\max}}{(1-\gamma)^2} \sum_{s \in \mathcal{S}} d_{\mu}^{(t+1)}(s) \sum_{i=1}^N \text{d}_{\text{TV}}(\tilde{\pi}_{-i}^{(t)}(\cdot|s), \pi_{-i}^{(t)}(\cdot|s)) \cdot \|\pi_i^{(t+1)}(\cdot|s) - \pi_i^{(t)}(\cdot|s)\|_1\,.
\end{align}
Then we observe that
\begin{align}
\label{eq:interm-control-KL}
&\sum_{i=1}^N \text{d}_{\text{TV}}(\tilde{\pi}_{-i}^{(t)}(\cdot|s), \pi_{-i}^{(t)}(\cdot|s)) \cdot \|\pi_i^{(t+1)}(\cdot|s) - \pi_i^{(t)}(\cdot|s)\|_1\nonumber\\
&\stackrel{(a)}{\leq} \sum_{i=1}^N \left( \frac{1}{2\sqrt{N}} \text{d}_{\text{TV}}(\tilde{\pi}_{-i}^{(t)}(\cdot|s), \pi_{-i}^{(t)}(\cdot|s))^2 + \frac{\sqrt{N}}{2} \|\pi_i^{(t+1)}(\cdot|s) - \pi_i^{(t)}(\cdot|s)\|_1^2 \right)\nonumber\\ 
&\stackrel{(b)}{\leq} \sum_{i=1}^N \left(\frac{1}{2\sqrt{N}} \text{KL}(\tilde{\pi}_{-i}^{(t)}(\cdot|s)||\pi_{-i}^{(t)}(\cdot|s)) + \frac{\sqrt{N}}{2} \text{KL}(\pi_i^{(t+1)}(\cdot|s)||\pi_i^{(t)}(\cdot|s)) \right)\nonumber\\
&\stackrel{(c)}{\leq} \sum_{i=1}^N \left(\frac{1}{2\sqrt{N}} \text{KL}(\pi^{(t+1)}(\cdot|s)||\pi^{(t)}(\cdot|s)) + \frac{\sqrt{N}}{2} \text{KL}(\pi_i^{(t+1)}(\cdot|s)||\pi_i^{(t)}(\cdot|s)) \right)\nonumber\\
&= \frac{\sqrt{N}}{2} \text{KL}(\pi^{(t+1)}(\cdot|s)||\pi^{(t)}(\cdot|s)) + \frac{\sqrt{N}}{2} \sum_{i=1}^N \text{KL}(\pi_i^{(t+1)}(\cdot|s)||\pi_i^{(t)}(\cdot|s))\nonumber\\
&\stackrel{(d)}{=} \sqrt{N}\, \text{KL}(\pi^{(t+1)}(\cdot|s)||\pi^{(t)}(\cdot|s))\,,
\end{align}
where~$(a)$~stems from Young's inequality ( $x y \leq \frac{x^2}{2 \sqrt{N}} + \frac{\sqrt{N} y^2}{2}$ for any~$x, y \in \mathbb{R}$), $(b)$~is a consequence of Pinsker's inequality, $(c)$~follows from observing that
$$
\text{KL}(\tilde{\pi}_{-i}^{(t)}(\cdot|s)||\pi_{-i}^{(t)}(\cdot|s)) = \sum_{j < i} \text{KL}(\pi_j^{(t+1)}(\cdot|s)||\pi_j^{(t)}(\cdot|s)) \leq \sum_{j=1}^N \text{KL}(\pi_j^{(t+1)}(\cdot|s)||\pi_j^{(t)}(\cdot|s))\,,
$$
where we used the definition of~$\tilde{\pi}_{-i}^{(t)}$ in~\eqref{eq:def-tilde-pi}. 
Then, $(d)$ uses the additivity of the KL divergence for product distributions. 
Overall, plugging~\eqref{eq:interm-control-KL} into~\eqref{eq:ineq-interm-termB}, we obtain 
\begin{equation}
\label{eq:bound-termB-natural-grad}
|\text{Term B}| \leq \frac{\phi_{\max}\sqrt{N}}{(1-\gamma)^2} \sum_{s \in \mathcal{S}} d_{\mu}^{(t+1)}(s)  \text{KL}(\pi^{(t+1)}(\cdot|s)||\pi^{(t)}(\cdot|s))\,.
\end{equation}
\end{enumerate}
Combining~\eqref{eq:bound-termA-natural-grad} and~\eqref{eq:bound-termB-natural-grad} yields the desired inequality~\eqref{eq:natural-grad-policy-improvement} and concludes the proof. 
\end{proof}

We now introduce an additional convenient notation for the Nash gap  induced by a given policy~$\pi \in \Pi$ as follows: 
\begin{align}
\text{Nash-gap}_i(\pi) &\triangleq \underset{\pi_i' \in \Pi^i}{\max} V_i^{\pi_i', \pi_{-i}}(\rho) - V_i^{\pi_i, \pi_{-i}}(\rho)\,, \forall i \in \mn\,,\\
\text{Nash-gap}(\pi) &\triangleq \underset{i \in \mn}{\max}\, \text{Nash-gap}_i(\pi)\,.
\end{align}
The next lemma connects the second term in the policy improvement (see Lemma~\ref{lem:natural-grad-policy-improvement}) with the Nash equilibrium gap. 
This lemma is a slightly refined version of Lemma~21 in \cite{zhang-et-al22softmax}. 
\begin{lemma}
\label{lem:lemma21-zhang}
If~$\eta \leq (1-\gamma)^2$, then for any initial distribution~$\nu \in \Delta(\ms),$
\begin{equation}
\text{Nash-gap}(\pi^{(t)})^2 \leq \frac{3 \tilde{\kappa}_{\rho}}{c \eta^2 (1-\gamma)}\sum_{i=1}^N \sum_{s \in \mathcal{S}} d_{\nu}^{\pi^{(t+1)}}(s) \log Z_t^{i,s}\,, 
\end{equation}
where we recall that~$c \triangleq \underset{i \in \mn}{\min}\underset{t \in \bN}{\inf}\, \underset{s \in \mathcal{S}}{\min} \underset{a_i^* \in \underset{a_i \in \mai}{\argmax}\bar{Q}^{\pi^{(t)}}_i(s,a_i)}{\sum}\,   \pi_i^{(t)}(a_i^*|s) >0 \,.$
\end{lemma}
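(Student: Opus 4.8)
The plan is to route everything through the maximal averaged advantage $A_i^*(s) := \max_{a_i \in \mai}\bar{A}_i^{(t)}(s,a_i)$, which is nonnegative because $\sum_{a_i}\pi_i^{(t)}(a_i|s)\bar{A}_i^{(t)}(s,a_i)=0$. The idea is to prove two matching one-sided bounds: (i) the per-agent Nash gap is controlled \emph{from above} by $A_i^*$, and (ii) $\log Z_t^{i,s}$ is controlled \emph{from below} by $(A_i^*)^2$. Chaining these produces the squared Nash gap on the left and the $\eta^{-2}$, $c^{-1}$ factors on the right, and the constant $c$ will enter precisely through the greedy-action mass in step (ii).

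First I would fix an agent $i$ and a best response $\pi_i^{*,t}\in\argmax_{\pi_i'}V_i^{\pi_i',\pi_{-i}^{(t)}}(\rho)$. Applying the single-agent performance difference lemma to the MDP faced by agent $i$ when $\pi_{-i}^{(t)}$ is frozen, and then using the greedy upper bound $\sum_{a_i}\pi_i'(a_i|s)\bar{A}_i^{(t)}(s,a_i)\le A_i^*(s)$, gives $\text{Nash-gap}_i(\pi^{(t)}) \le \frac{1}{1-\gamma}\sum_{s}d_\rho^{\pi_i^{*,t},\pi_{-i}^{(t)}}(s)\,A_i^*(s)$. Squaring and applying Cauchy--Schwarz against the weight $d_\nu^{\pi^{(t+1)}}(s)$, together with $d_\nu^{\pi^{(t+1)}}(s)\ge(1-\gamma)\nu(s)$ and $\sum_s d_\rho^{\pi_i^{*,t},\pi_{-i}^{(t)}}(s)=1$, yields $\text{Nash-gap}_i(\pi^{(t)})^2 \le \frac{1}{(1-\gamma)^3}\sup_{\pi\in\Pi}\|\frac{d_\rho^\pi}{\nu}\|_\infty \sum_s d_\nu^{\pi^{(t+1)}}(s)\,A_i^*(s)^2$, exactly mirroring step $(c)$ of the Euclidean analysis in Appendix~\ref{app:proof-thm-pmd-euclidean}.

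The crux, and the step I expect to be the main obstacle, is the lower bound on $\log Z_t^{i,s}$. Writing $\lambda := \eta/(1-\gamma)$ and splitting the sum defining $Z_t^{i,s}$ into the greedy actions (the maximizers of $\bar{Q}_i^{(t)}(s,\cdot)$, equivalently of $\bar{A}_i^{(t)}(s,\cdot)$) and the rest, I would apply $e^x\ge 1+x+\tfrac{x^2}{2}$ for $x\ge 0$ on the greedy actions and $e^x\ge 1+x$ on the others. The linear term then cancels by $\sum_{a_i}\pi_i^{(t)}(a_i|s)\bar{A}_i^{(t)}(s,a_i)=0$, and since the greedy mass is at least $c$ by definition, this gives $Z_t^{i,s}\ge 1+\tfrac{\lambda^2 c}{2}A_i^*(s)^2$. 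Because $\eta\le(1-\gamma)^2$ (and the advantages are bounded) keeps the argument in a bounded range, an elementary inequality of the form $\log(1+y)\ge\tfrac{2}{3}y$ then produces $\log Z_t^{i,s}\ge \frac{\eta^2 c}{3(1-\gamma)^2}A_i^*(s)^2$. This quadratic, $c$-dependent estimate is the refinement over Lemma~20--21 in \cite{zhang-et-al22softmax}; the delicate points are tracking where $c$ enters and verifying that the step-size restriction keeps $\log(1+\cdot)$ in its near-linear regime so that the constants combine to the stated factor $3$.

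Finally I would substitute the $\log Z$ bound into the Cauchy--Schwarz estimate, obtaining for each $i$ the inequality $\text{Nash-gap}_i(\pi^{(t)})^2 \le \frac{3\sup_{\pi}\|d_\rho^\pi/\nu\|_\infty}{c\,\eta^2(1-\gamma)}\sum_s d_\nu^{\pi^{(t+1)}}(s)\log Z_t^{i,s}$. Since every summand $\log Z_t^{i,s}\ge 0$ (itself a consequence of Jensen's inequality applied to $Z_t^{i,s}$), the maximum over $i$ is dominated by the sum over $i$, which converts $\text{Nash-gap}(\pi^{(t)})=\max_{i}\text{Nash-gap}_i(\pi^{(t)})$ on the left into the full sum $\sum_{i=1}^N$ on the right. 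Taking the infimum over the reference distribution $\nu$ to replace $\sup_{\pi}\|d_\rho^\pi/\nu\|_\infty$ by $\tilde{\kappa}_{\rho}=\inf_\nu\sup_\pi\|d_\rho^\pi/\nu\|_\infty$, as at the end of the Euclidean proof, delivers the claimed bound.
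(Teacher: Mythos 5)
Your proposal is correct and follows essentially the same route as the paper: multi-agent performance difference lemma, the greedy upper bound via $\max_{a_i}\bar{A}_i^{(t)}(s,a_i)$, the change of measure with $d_{\nu}^{\pi^{(t+1)}}(s)\ge(1-\gamma)\nu(s)$ plus Cauchy--Schwarz, and finally the bound $\bigl(\tfrac{\eta}{1-\gamma}\max_{a_i}\bar{A}_i^{(t)}(s,a_i)\bigr)^2\le\tfrac{3}{c}\log Z_t^{i,s}$ before summing over $i$ using $\log Z_t^{i,s}\ge 0$. The only difference is that the paper imports that last inequality directly from the proof of Lemma~21 in \cite{zhang-et-al22softmax}, whereas you sketch a self-contained derivation of it (splitting $Z_t^{i,s}$ over the greedy set, using $e^x\ge 1+x+x^2/2$ there and $e^x\ge 1+x$ elsewhere, and $\log(1+y)\ge\tfrac{2}{3}y$ in the range enforced by $\eta\le(1-\gamma)^2$), which is a correct reconstruction of the cited argument.
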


\begin{proof}
We start by upper-bounding the Nash gap of every policy~$\pi^{(t)}$ for every iteration~$t$. As a first step, we use the performance difference lemma for policy~$\pi^{(t)}$, for any~$i \in \mn$ and any policy~$\pi_i' \in \Pi^i$ to write: 
\begin{align}
V_i^{\pi_i', \pi_{-i}^{(t)}}(\rho) - V_i^{\pi^{(t)}}(\rho) 
&= \frac{1}{1-\gamma} \sum_{s \in \ms, a_i \in \mai} d_{\rho}^{\pi_i', \pi_{-i}^{(t)}}(s) \pi_i'(a_i|s) \bar{A}_i^{\pi^{(t)}}(s,a_i)\nonumber\\
&\leq \frac{1}{1-\gamma} \sum_{s \in \ms} d_{\rho}^{\pi_i', \pi_{-i}^{(t)}}(s) \underset{a_i \in \mai}{\max} \bar{A}_i^{\pi^{(t)}}(s,a_i) \nonumber\\ 
&= \frac{1}{1-\gamma} \sum_{s \in \ms} \sqrt{\frac{d_{\rho}^{\pi_i', \pi_{-i}^{(t)}}(s)}{d_{\nu}^{\pi^{(t+1)}}(s)}} \sqrt{d_{\nu}^{\pi^{(t+1)}}(s)} \sqrt{d_{\rho}^{\pi_i', \pi_{-i}^{(t)}}(s)} \underset{a_i \in \mai}{\max} \bar{A}_i^{\pi^{(t)}}(s,a_i) \nonumber\\
&\leq \frac{\sqrt{\tilde{\kappa}_{\rho}}}{(1-\gamma)^{3/2}} \sum_{s \in \ms}\sqrt{d_{\nu}^{\pi^{(t+1)}}(s)} \sqrt{d_{\rho}^{\pi_i', \pi_{-i}^{(t)}}(s)} \underset{a_i \in \mai}{\max} \bar{A}_i^{\pi^{(t)}}(s,a_i) \nonumber\\ 
&\leq \frac{(1-\gamma)\sqrt{\tilde{\kappa}_{\rho}}}{\eta(1-\gamma)^{3/2}} 
\left( \sum_{s \in \ms} d_{\rho}^{\pi_i', \pi_{-i}^{(t)}}(s) \right)^{1/2} 
\left(\sum_{s \in \ms} d_{\nu}^{\pi^{(t+1)}}(s) \left( \frac{\eta \underset{a_i \in \mai}{\max} \bar{A}_i^{\pi^{(t)}}(s,a_i) }{1-\gamma} \right)^2 \right)^{1/2}\nonumber\\ 
&\leq \frac{\sqrt{\tilde{\kappa}_{\rho}}}{\eta \sqrt{1-\gamma}} 
\left(\sum_{i = 1}^N \sum_{s \in \ms} d_{\nu}^{\pi^{(t+1)}}(s) \left( \frac{\eta \underset{a_i \in \mai}{\max} \bar{A}_i^{\pi^{(t)}}(s,a_i) }{1-\gamma} \right)^2 \right)^{1/2}\,.
\end{align}
Hence, we have shown so far that 
\begin{equation}
\label{eq:1-proof-lemma-nashgap}
\text{Nash-gap}(\pi^{(t)})^2 \leq \frac{\tilde{\kappa}_{\rho}}{\eta^2 (1-\gamma)} \sum_{i = 1}^N \sum_{s \in \ms} d_{\nu}^{\pi^{(t+1)}}(s) \left( \frac{\eta \underset{a_i \in \mai}{\max} \bar{A}_i^{\pi^{(t)}}(s,a_i) }{1-\gamma} \right)^2\,. 
\end{equation}
To conclude the proof, we now use an inequality that appeared in the proof of Lemma~21 in \cite{zhang-et-al22softmax} in p. 26 (in the arxiv version), namely the second to last inequality in the proof: 
\begin{equation}
\label{eq:2-proof-lemma-nashgap}
\sum_{i = 1}^N \sum_{s \in \ms} d_{\nu}^{\pi^{(t+1)}}(s) \left( \frac{\eta \underset{a_i \in \mai}{\max} \bar{A}_i^{\pi^{(t)}}(s,a_i) }{1-\gamma} \right)^2 
\leq \frac{3}{c} \sum_{i=1}^N \sum_{s \in \mathcal{S}} d_{\nu}^{\pi^{(t+1)}}(s) \log Z_t^{i,s}\,.
\end{equation}

Combining~\eqref{eq:1-proof-lemma-nashgap} and~\eqref{eq:2-proof-lemma-nashgap} leads to the desired inequality of Lemma~\ref{lem:lemma21-zhang}. 
\end{proof}

The last part of the proof follows the same lines as~\cite{zhang-et-al22softmax} (see p. 29 therein). 
Combining Lemma~\ref{lem:natural-grad-policy-improvement} with Lemma~\ref{lem:lemma21-zhang}, we obtain: 
\begin{equation}
\text{Nash-gap}(\pi^{(t)})^2 \leq \frac{3 \tilde{\kappa}_{\rho}}{c \eta (1-\gamma)} (\Phi^{\pi^{(t+1)}}(\nu) - \Phi^{\pi^{(t)}}(\nu))\,.
\end{equation}
As a consequence, we have 
\begin{equation}
\frac{1}{T} \sum_{t=1}^{T} \text{NE-gap}(\pi^{(t)})^2 \leq \frac{3 \tilde{\kappa}_{\rho}}{c \eta (1-\gamma) T} (\Phi^{\pi^{(T+1)}}(\nu) - \Phi^{\pi^{(1)}}(\nu)) \leq  \frac{6 \tilde{\kappa}_{\rho} \phi_{\max}}{(1-\gamma)^2 c \eta T}
\end{equation}

We conclude by using Jensen's inequality and setting the step size~$\eta$ to its largest possible value, i.e., $\eta = \frac{(1-\gamma)^2}{2\phi_{\max} \sqrt{N}}$: 
\begin{equation}
\text{Nash-regret}(T) \leq 
\left(\frac{1}{T} \sum_{t=1}^{T} \text{NE-gap}(\pi^{(t)})^2\right)^{1/2} 
\leq \sqrt{\frac{12 \phi_{\max}^2 \tilde{\kappa}_{\rho} \sqrt{N}}{(1-\gamma)^4 c T}}\,.
\end{equation}

\subsection{Auxiliary Lemmas}

The following lemma is standard and well-known in the literature (see e.g.  \cite{xiao22jmlr}). 

\begin{lemma}[Performance Difference Lemma]
\label{lem:performance-difference-single-agent}
    Consider a Markov Decision Process~$(\S, \A, P, r, \gamma, \mu)$. 
    Then for any policies~$\pi, \pi' \in \Pi,$ 
    \begin{equation*}
        V^{\pi'}(\mu) - V^{\pi}(\mu) = \frac{1}{1-\gamma}
        \sum_{s\in\ms} \sum_{a\in\ma} d_{\mu}^{\pi'}(s) (\pi'(a|s) - \pi(a|s)) Q^{\pi}(s,a)\,, 
    \end{equation*}
    where for every policy~$\pi \in \Pi, V^{\pi}(\mu)$ and~$Q^{\pi}: \ms \times \ma \to \R$ are the state-value function with initial distribution~$\mu$ and state-action value function of policy~$\pi$ respectively. 
\end{lemma}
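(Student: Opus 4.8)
The plan is to prove this via the standard telescoping (advantage decomposition) argument of Kakade--Langford. Introduce the advantage function $A^{\pi}(s,a) \triangleq Q^{\pi}(s,a) - V^{\pi}(s)$. The heart of the argument, and the step I would establish first, is the pointwise identity
\[
V^{\pi'}(s_0) - V^{\pi}(s_0) = \E_{\pi'}\!\left[\sum_{t=0}^{\infty} \gamma^t A^{\pi}(s_t,a_t) \,\Big|\, s_0\right],
\]
where the expectation is taken over trajectories generated by following $\pi'$ from the fixed initial state $s_0$. Everything else reduces to rewriting this identity in terms of occupancy measures and $Q^{\pi}$.

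To prove the pointwise identity I would start from its right-hand side and substitute the Bellman relation $Q^{\pi}(s_t,a_t) = r(s_t,a_t) + \gamma\, \E_{s_{t+1}\sim P(\cdot|s_t,a_t)}[V^{\pi}(s_{t+1})]$ into the definition of $A^{\pi}$, so that the generic summand becomes $\gamma^t r(s_t,a_t) + \bigl(\gamma^{t+1} V^{\pi}(s_{t+1}) - \gamma^t V^{\pi}(s_t)\bigr)$. Taking the expectation under $\pi'$ and summing over $t$, the reward terms sum exactly to $V^{\pi'}(s_0)$ by definition of the value function, while the consecutive value terms telescope and leave $-V^{\pi}(s_0)$; the boundary contribution at infinity vanishes since $\gamma^t V^{\pi}(s_t) \to 0$ because $V^{\pi}$ is bounded and $\gamma \in (0,1)$. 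I expect this telescoping to be the only step carrying real content; the one subtlety worth flagging is the interchange of the infinite sum and the expectation, which is justified by absolute summability of the series (rewards, and hence $V^{\pi}$ and $A^{\pi}$, are bounded under the standing finiteness assumptions).

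Second, I would average the pointwise identity over $s_0 \sim \mu$ and recognize the discounted trajectory expectation as an integral against the state occupancy measure. Using $d_{\mu}^{\pi'}(s) = (1-\gamma)\sum_{t=0}^{\infty} \gamma^t \Prob_{\mu,\pi'}(s_t = s)$, the nested expectation collapses to
\[
V^{\pi'}(\mu) - V^{\pi}(\mu) = \frac{1}{1-\gamma}\sum_{s\in\ms} d_{\mu}^{\pi'}(s)\sum_{a\in\ma}\pi'(a|s)\, A^{\pi}(s,a).
\]
Finally, I would eliminate the advantage in favor of $Q^{\pi}$: since $\sum_{a}\pi'(a|s) = 1$ and $V^{\pi}(s) = \sum_{a}\pi(a|s)Q^{\pi}(s,a)$, the inner sum equals $\sum_{a}\pi'(a|s)Q^{\pi}(s,a) - V^{\pi}(s) = \sum_{a}(\pi'(a|s) - \pi(a|s))Q^{\pi}(s,a)$, which yields the claimed formula. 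This last step is a one-line algebraic substitution and presents no difficulty, so the essential work lies entirely in the telescoping identity of the first step.
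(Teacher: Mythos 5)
Your proof is correct and is exactly the standard Kakade--Langford telescoping argument; the paper itself does not spell out a proof but simply cites Lemma~1 of \cite{xiao22jmlr}, whose proof proceeds by the same advantage-function telescoping and occupancy-measure rewriting you describe. Nothing further is needed.
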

\begin{proof}
See Lemma 1 in \cite{xiao22jmlr}.
\end{proof}

The next lemma is a multi-agent version of the above performance difference lemma which was used for example in \cite{leonardos2021global}.  
\begin{lemma}[Multi-Agent Performance Difference Lemma (Single-Agent Deviation)]
\label{lem:performance-difference-multi-agent}
    Consider a Markov game $\Gamma = \left\langle \mn, \ms, (\mai)_{i \in \mn}, P, (r_i)_{i \in \mn}, \gamma \right\rangle$ with initial state distribution $\mu\in \Delta(\ms)$. Then, for any agent~$i \in \mn$, any two policies $\pi'_i,\pi_i\in\Pi^i$ and any joint policy $\pi_{-i}\in\Pi^{-i}$ of the other agents, we have: 
    \begin{align*}
        V_{i}^{\pi'_i, \pi_{-i}}(\mu) - V_{i}^{\pi_i, \pi_{-i}}(\mu) &= \frac{1}{1-\gamma} \sum_{s\in\ms}\sum_{a_i\in\ma_i} d_{\mu}^{\pi'_i, \pi_{-i}}(s) (\pi'_i(a_i|s) - \pi_i(a_i|s)) \bar{Q}_i^{\pi_i,\pi_{-i}}(s,a_i)\,,
    \end{align*}
    where we use the same notations as in section~\ref{sec:preliminaries}. 
\end{lemma}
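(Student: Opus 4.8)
The plan is to reduce the multi-agent statement to the single-agent Performance Difference Lemma (\cref{lem:performance-difference-single-agent}) by observing that, once the opponents' joint policy $\pi_{-i}$ is frozen, agent~$i$ effectively faces a single-agent MDP. First I would construct the induced MDP $\tilde{\Gamma}_i = (\ms, \mai, \tilde{P}, \tilde{r}_i, \gamma, \mu)$ whose transition kernel and reward are obtained by marginalizing over the opponents' actions:
\[
\tilde{P}(s'|s, a_i) \triangleq \sum_{a_{-i}} \pi_{-i}(a_{-i}|s)\, P(s'|s, a_i, a_{-i}), \qquad \tilde{r}_i(s, a_i) \triangleq \sum_{a_{-i}} \pi_{-i}(a_{-i}|s)\, r_i(s, a_i, a_{-i}).
\]

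Next I would verify three correspondences between $\tilde{\Gamma}_i$ and the original game with opponents fixed at $\pi_{-i}$: (i)~the value function $V^{\pi_i}$ computed in $\tilde{\Gamma}_i$ coincides with $V_i^{\pi_i, \pi_{-i}}$; (ii)~the single-agent state-action value $Q^{\pi_i}$ in $\tilde{\Gamma}_i$ coincides with the averaged value $\bar{Q}_i^{\pi_i, \pi_{-i}}$; and (iii)~the single-agent occupancy measure in $\tilde{\Gamma}_i$ coincides with $d_{\mu}^{\pi_i, \pi_{-i}}$. Claim~(i) follows because the $\gamma$-discounted Bellman recursion in $\tilde{\Gamma}_i$ shares the same fixed point as the recursion defining $V_i^{\pi_i, \pi_{-i}}$ after averaging the opponents' actions; I would make this rigorous by appealing to uniqueness of the value function (the Bellman operator is a contraction), rather than informally ``absorbing'' the opponents into the dynamics.

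Given~(i), claim~(ii) is immediate: by definition,
\[
\bar{Q}_i^{\pi_i, \pi_{-i}}(s, a_i) = \sum_{a_{-i}} \pi_{-i}(a_{-i}|s)\Big[ r_i(s, a_i, a_{-i}) + \gamma \sum_{s'} P(s'|s, a_i, a_{-i})\, V_i^{\pi_i, \pi_{-i}}(s') \Big],
\]
which reorganizes into $\tilde{r}_i(s, a_i) + \gamma \sum_{s'} \tilde{P}(s'|s, a_i)\, V_i^{\pi_i, \pi_{-i}}(s')$, exactly the Bellman expression for $Q^{\pi_i}$ in $\tilde{\Gamma}_i$. Claim~(iii) holds because the state-visitation process depends only on the transition dynamics, which are identical in both descriptions once $\pi_{-i}$ is fixed.

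Finally I would apply \cref{lem:performance-difference-single-agent} to $\tilde{\Gamma}_i$ with the two policies $\pi'_i$ and $\pi_i$, and substitute the identities (i)--(iii): the single-agent $Q$-function becomes $\bar{Q}_i^{\pi_i, \pi_{-i}}$ and the occupancy measure of the leading policy becomes $d_{\mu}^{\pi'_i, \pi_{-i}}$, yielding precisely the claimed identity. The main (and essentially only) obstacle is the careful bookkeeping behind the correspondences above, in particular phrasing the argument for~(i) through the contraction/uniqueness of the value function; none of these steps requires a genuinely new idea beyond the single-agent lemma.
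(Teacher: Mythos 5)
Your proposal is correct, but it takes a different route from the paper. The paper's proof does not construct an induced MDP: it applies \cref{lem:performance-difference-single-agent} directly to the \emph{joint} policies $(\pi_i',\pi_{-i})$ and $(\pi_i,\pi_{-i})$ over the joint action space $\ma$, then exploits the product form $\pi'(a|s)-\pi(a|s) = \bigl(\pi_i'(a_i|s)-\pi_i(a_i|s)\bigr)\pi_{-i}(a_{-i}|s)$ to factor out the common opponent policy and absorb the sum over $a_{-i}$ into the definition of $\bar{Q}_i^{\pi}(s,a_i)$ --- a two-line algebraic computation. Your reduction via the induced MDP $\tilde{\Gamma}_i$ with marginalized kernel $\tilde{P}$ and reward $\tilde{r}_i$ is also sound: the only delicate point is that the reward and next state are correlated through the same draw of $a_{-i}$ in the original game but decoupled in $\tilde{\Gamma}_i$, which is harmless for expected discounted return since only the conditional mean of the reward and the marginal transition law enter; your appeal to the uniqueness of the Bellman fixed point handles this correctly, and correspondences (ii) and (iii) then follow as you say. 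The trade-off is that the paper's argument is shorter and purely algebraic, while yours packages the reusable conceptual fact that an agent facing fixed opponents sees a single-agent MDP, at the cost of verifying the three correspondences explicitly.
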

\begin{proof}
    We provide a proof for completeness. Using \cref{lem:performance-difference-single-agent}, we have for every~$i \in \mn, \pi'_i,\pi_i\in\Pi^i, \pi_{-i}\in\Pi^{-i},$
    \begin{align*}
        V_{i}^{\pi'_i, \pi_{-i}}(\mu) - V_{i}^{\pi_i, \pi_{-i}}(\mu) &= \frac{1}{1-\gamma} \sum_{s\in\ms} \sum_{a = (a_i, a_{-i})\in\ma} d_{\mu}^{\pi_i',\pi_{-i}}(s) \left( \pi_i'(a_i|s) \pi_{-i}(a_{-i}|s) - \pi_i(a_i|s) \pi_{-i} (a_{-i}|s) \right) Q_i^{\pi}(s,a)\\
        &= \frac{1}{1-\gamma} \sum_{s\in\ms} \sum_{a_i\in\ma_i} d_{\mu}^{\pi_i',\pi_{-i}}(s) \left( \pi'_i(a_i|s) - \pi_i(a_i|s) \right) \underbrace{\sum_{a_{-i}\in\ma_{-i}} \pi_{-i}(a_{-i}|s) Q_i^{\pi}(s,(a_i,a_{-i}))}_{=\bar{Q}^{\pi}_i(s,a_i)} \\
        &= \frac{1}{1-\gamma} \sum_{s\in\ms} \sum_{a_i\in\ma_i} d_{\mu}^{\pi'_i, \pi_{-i}}(s) (\pi'_i(a_i|s) - \pi_i(a_i|s)) \bar{Q}_i^{\pi}(s,a_i).
    \end{align*}
\end{proof}

\end{document}